\newtheorem{assumption}{Assumption}
\newcommand{\myspace}[1]{\par\vspace{#1\baselineskip}}
\newtheorem{proposition}{Proposition}
\newtheorem{definition}{Definition}
\newtheorem{Property}{Property}
\newtheorem{theorem}{Theorem}
\newtheorem{lemma}{Lemma}
\theoremstyle{remark}
\newtheorem{remark}{Remark}
\colorlet{Changes@Color}{red}
\begin{document}

\title{Globally Convergent Accelerated Algorithms for Multilinear Sparse Logistic Regression with $\ell_0$-constraints}
\author{
Weifeng Yang\orcidlink{0009-0004-9453-3525} and Wenwen Min$^*$\orcidlink{0000-0002-2558-2911} 
\IEEEcompsocitemizethanks{
\IEEEcompsocthanksitem 
Weifeng Yang and Wenwen Min contributed equally to this work, and they are with the School of Information Science and Engineering, Yunnan University, Kunming 650091, Yunnan, China. 
E-mail: minwenwen@ynu.edu.cn. 
}
\thanks{Manuscript received XX, 202x; revised XX, 202x. \newline
        (Corresponding authors:  Wenwen Min)}}
        
\markboth{IEEE TRANSACTIONS ON NEURAL NETWORKS AND LEARNING SYSTEMS, 2023}{Yang \MakeLowercase{\textit{et al. }}: Accelerated Algorithm for Multilinear Sparse Logistic Regression}

\maketitle
\begin{abstract}

Tensor data represents a multidimensional array. Regression methods based on low-rank tensor decomposition leverage structural information to reduce the parameter count. Multilinear logistic regression serves as a powerful tool for the analysis of multidimensional data. To improve its efficacy and interpretability, we present a Multilinear Sparse Logistic Regression model with $\ell_0$-constraints ($\ell_0$-MLSR).
In contrast to the $\ell_1$-norm and $\ell_2$-norm, the $\ell_0$-norm constraint is better suited for feature selection. However, due to its nonconvex and nonsmooth properties, solving it is challenging and convergence guarantees are lacking. 
Additionally, the multilinear operation in $\ell_0$-MLSR also brings non-convexity.
To tackle these challenges, we propose an Accelerated Proximal Alternating Linearized Minimization with Adaptive Momentum (APALM$^+$) method to solve the $\ell_0$-MLSR model. 
We provide a proof that APALM$^+$ can ensure the convergence of the objective function of $\ell_0$-MLSR. 
We also demonstrate that APALM$^+$ is globally convergent to a first-order critical point as well as establish convergence rate by using the Kurdyka–Łojasiewicz property.  
Empirical results obtained from synthetic and real-world datasets validate the superior performance of our algorithm in terms of both accuracy and speed compared to other state-of-the-art methods.
\end{abstract}

\begin{IEEEkeywords}
Tensor, $\ell_0$-constraints, multilinear sparse logistic regression, non-convex optimization, proximal alternating linearized minimization (PALM), accelerated first-order algorithm
\end{IEEEkeywords}

\section{Introduction}
\IEEEPARstart{L}ogistic regression (LR) is a special nonlinear regression model mainly used to solve classification problems, it has been particularly effective in some scenarios, such as neural networks \cite{zhang2017prediction,chizat2020implicit}, natural language processing \cite{wang2022deep}, bioinformatics \cite{tan2013minimax,wang2019lmtrda,7302571,min2018network}, and image classification \cite{krizhevsky2012imagenet,zheng2022hyperspectral}. 

However, up to the present date, the majority of logistic regression algorithms have assumed that their inputs are vectors. In fact, many datasets, such as electroencephalogram (EEG) and functional magnetic resonance imaging data \cite{wang2014clinical,wu2022bayesian}, multi-omics cancer data \cite{min2021tscca}, and other image datasets, take the form of high-dimensional tensors rather than vectors. The graphical illustration in Figure \ref{fig1} highlights the distinction between traditional logistic regression with vector inputs and multilinear logistic regression with tensor inputs. As depicted in the figure, the process of stretching matrices and tensors into vectors is a straightforward technique, but it may lead to the loss of relevant information of various dimensions \cite{tan2013logistic,guo2011tensor}. Moreover, expanding the data into an exceedingly high-dimensional vector could potentially trigger the curse of dimensionality \cite{wang2014clinical}. Consequently, there is an increasing interest in techniques that address two-dimensional (matrix-based) or higher-order (tensor-based) inputs. For example, it has been discovered that classic vector-based Principal Component Analysis (PCA) and Linear Discriminant Analysis (LDA) are less successful in face recognition tasks than two-dimensional PCA and LDA \cite{yang2004two}. And \cite{wang2014clinical} proposes the multilinear sparse logistic regression model with $\ell_1$, $\ell_2$-constrains and apply the block proximal gradient descent framework to solve multilinear logistic regression problem. 
\cite{wang2022hyperspectral} also proposes a feature learning phase that combines Khatri-Rao decomposition with multilinear logistic regression. However, a significant number of these algorithms often do not consider sparsity constraints, let alone addressing the more intricate problem of $\ell_{0}$-norm constraints.

And in the context of sparsity constraints, a notable strength of sparse logistic regression lies in its ability to perform feature selection \cite{guyon2002gene,wang2021extended}. Traditional logistic regression lacks an inherent mechanism for feature selection \cite{tan2013minimax}. In contrast, sparse logistic regression is able to select a subset of features with high predictive power and enhance generalization performance \cite{abramovich2018high,wang2019greedy}. 
In the realm of sparsity-constrained logistic regression, the work of \cite{plan2012robust} initially introduced the logistic regression model with a sparsity constraint. \cite{beck2013sparsity} delved into the statistical properties of the linear regression model subjected to a sparsity constraint. And regarding sparsity, one of the most effective and intuitive constraints is the $\ell_{0}$ norm. The $\ell_{0}$-norm is very suitable for sparse coding in machine learning, it can screen and remove the least ideal sparse feature components \cite{bradley1998feature,mairal2009online,min2021Group,min2022novel}. However, when the $\ell_{0}$-norm is employed as the sparsity criterion for optimization problem, it transforms into a NP-hard problem \cite{bolte2014proximal}. Therefore, the challenge of how to apply non-convex and non-smooth $\ell_{0}$-norms in logistic regression to ensure sparsity has attracted increasing attention. For example, \cite{pan2017convergent} employs the proximal operator to address the logistic regression with $\ell_{0}$-norm constraint, and \cite{wang2019greedy} extends this groundwork by incorporating a Newton iteration step. However, these approaches are tailored to conventional logistic regression and not applicable to multilinear sparse logistic regression. 

To address these issues, we propose a Multilinear Sparse Logistic Regression model with $\ell_0$-constraints ($\ell_0$-MLSR) for analyzing multi-way data (Figure \ref{fig1}(b)).
The $\ell_0$-norm constraints inherent in $\ell_0$-MLSR introduce nonconvexity, compounded by the nonconvex property arising from the multilinear operation, evidently, the $\ell_0$-MLSR model presents itself as a challenging non-convex and non-smooth optimization problem.
To address this issue, we devise a method known as Accelerated Proximal Alternating Linearized Minimization with Adaptive Momentum (APALM$^+$). Experimental results from synthetic and real-data datasets validate the effectiveness of APALM$^+$ when compared to existing algorithms. 

The main contributions of this paper can be summarized as follows.

(1) We present a Multilinear Sparse Logistic Regression model with $\ell_0$-constraints ($\ell_0$-MLSR), which allows us to take tensor data directly as input. Compared with $\ell_1$-norm and $\ell_2$-norm, $\ell_0$-norm constraint is more conducive to feature selection in the $\ell_0$-MLSR model.

(2) We propose an Accelerated Proximal Alternating Linearized Minimization method with Adaptive Momentum (APALM$^+$) to solve the $\ell_0$-MLSR model. We demonstrate that our method is globally convergent to a first-order critical point as well as establish convergence rate by using the Kurdyka–Łojasiewicz property. Additionally, we demonstrate the effectiveness of its adaptive extrapolation version. 

(3) We apply our algorithm to solve the $\ell_{0}$-MLSR model. The numerical experimental results on synthetic and real datasets show that our algorithm has superior numerical performance compared to state-of-the-art algorithms.

\begin{figure}[htp]
    \centering 
      \subfloat[Vector-based traditional logistic regression.]
    {
    \centering 
      \includegraphics[width=1\linewidth]{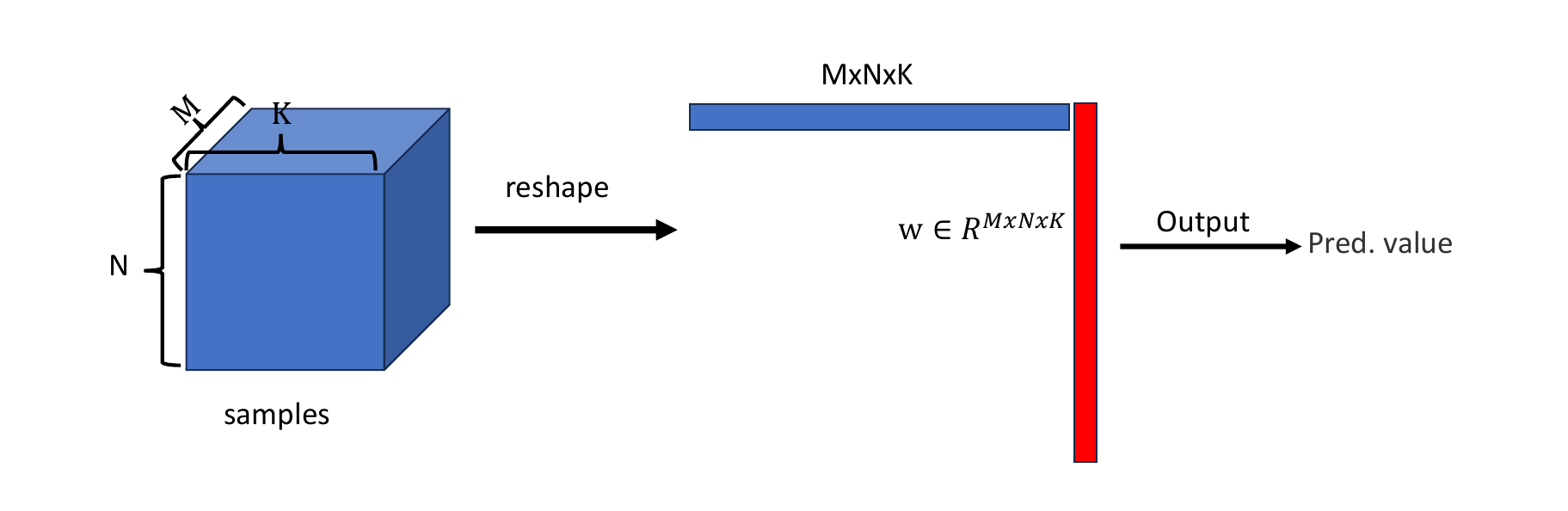}
    } \\
    \subfloat[Tensor-based multilinear logistic regression.]
    {
    \centering 
      \includegraphics[width=1\linewidth]{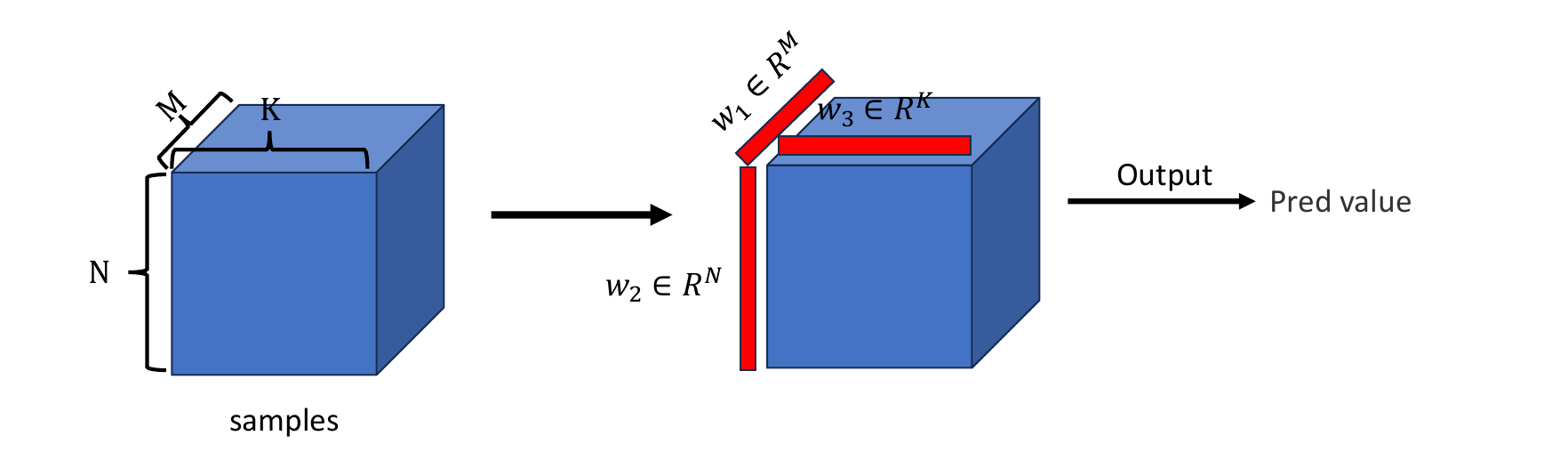}
    }
    \caption{Vector-based traditional logistic regression and multilinear logistic regression work on multidimensional data.
    }
    \label{fig1}
\end{figure}
\section{Symbol definitions and preliminaries}
\label{section: pre}
In this section, we will introduce some symbol definitions and the preliminary knowledge required in the derivation process. Table \ref{notation} summarizes the notation used in the paper. 

\begin{table}[!ht] 
\renewcommand\arraystretch{1.5}
\centering 
\caption{Notation}
\begin{tabular}{|p{3cm}|p{5cm}|} \hline 
Notation & Definition \\ \hline
$\left\{{x_{i}} \right\}^{n}_{i=1}$ & $(x_{1}, x_{2}. . . . . . , x_{n})$ \\
$\times_{k}$ & mode-k product \\
$\mathcal{W}$ &   $\left\{{w_{i}}\right\}^{p}_{i=1}$ \\

$(\mathcal{W}_{\backslash k },\Bar{w}_{k},\Bar{b})$ & $\left\{\left\{{w_{i}}\right\}^{k-1}_{i=1},\Bar{w_{k}},\left\{{w_{i}}\right\}^{n}_{i=k+1},\Bar{b}\right\}$ \\

$x_{(n)}$  & $\left\{{x_{i}} \right\}^{n}_{i=1}$ \\ 
$x_{(n)}-y_{(n)}$ &	$\left\{{x_{i}}-{y_{i}}\right\}^{n}_{i=1}$ \\ 
$L_{j}$ and $L_{\nabla H_{x_{j}}}$ & the Lipschitz constant denoting the variable of the $j$-th block of the function $H(x_{(n)})$ \\ 
$x^{k}_{i}$  & the $i$-th block of $x$ within the $k$-th outer loop \\ 
 
$\mathcal{C}^{p}_{L}(X)$ & a collection of functions satisfying the Lipschitz property \\

$\langle \centerdot, \centerdot \rangle$ & inner product \\
$||\centerdot||_{p}$ & $l_{p}$ norm \\
$||\left\{{x_{i}}\right\}^{n}_{i=1}||$ & $ \sum_{i=1}^{n}||(x_{i})|| \ ({\forall} x_{i} \in R^{n})$\\ \hline
\end{tabular}
\label{notation}
\end{table}
\subsection{Notation and preliminaries for nonconvex analysis}
\begin{definition}
If $A$ is a set, then defined the $A'$ is set of overall cluster points of $A$, $A^{o}$ is set of overall inter points of $A$. 
\label{d1}
\end{definition}
\begin{definition}
Proper Function: A function $g: \mathbb{R}^{n} \rightarrow (-\infty,+\infty]$ is said to be proper if $\mathrm{dom}$ $g \neq \emptyset$, where $\mathrm{dom} ~g =\left\{x \in \mathbb{R}:~ g(x)<\infty \right\}$. 
\end{definition}
\begin{definition}
Lower Semicontinuous Function: if $f$ is satisfied:
\begin{center}
$\lim_{k\rightarrow \infty} x_{k}=x, ~ f(x) \leq \operatorname*{lim}\operatorname*{inf}_{k\to\infty}f(x_{k}) ({\forall} x_{k} \in \mathrm{dom}~ f)$,
\end{center}
then $f$ is called lower semicontinuous at $\mathrm{dom}~ f$. 
\label{lower}
\end{definition}
\begin{definition}
Coercive Function: If $f$ is coercive, then $\left\{x | x \in R^{n}, ~ f(x) < a, ~{\forall} a \in \mathbb{R} \right\}$ is bounded and $\inf_{x} f(x) > -\infty$.
\label{d0}
\end{definition}
\begin{definition}
Let $f$ be a proper lower semicontinuous function, The Fréchet subdifferential of $f$ at $x$, written ${\hat{\partial}} f(x)$, is the set of all vectors $u$ which satisfy: 
\begin{center}
$\operatorname*{lim}_{y\neq x,y\to x}\cdot\frac{f(y)-f(x)-\langle u,\ y-x\rangle}{\|y-x\|}\ge0, $
\end{center}
when $ x \notin \mathrm{dom}~ f$, then set ${\hat{\partial}} f(x)=\emptyset$.
\label{d2}
\label{dsubdiff}
\end{definition}
\begin{definition}
The limiting subdifferential: $\partial f(x): =\{u\in\mathbb{R}^{n}: \exists x^{k}\to x,f(x^{k})\to f(x),u^{k}\to u,u^{k}\in\widehat{\partial}f(x^{k})\}.$
\end{definition}
\begin{proposition}
Fermat’s lemma: Let $f$ be a proper lower semicontinuous function. If $f$ has a local minimum at $x^{*}$, then $0 \in \partial f(x^{*})$. 
\label{p1}
\end{proposition}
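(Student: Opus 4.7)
The plan is to show that $0$ lies in the Fréchet subdifferential $\hat{\partial} f(x^{*})$ and then appeal to the inclusion $\hat{\partial} f(x^{*}) \subseteq \partial f(x^{*})$ that follows immediately from the definition of the limiting subdifferential.

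First I would unpack the definition of $\hat{\partial} f(x^{*})$ with the candidate vector $u=0$. Substituting into Definition \ref{dsubdiff}, verifying $0\in\hat{\partial} f(x^{*})$ reduces to checking
\begin{equation*}
\liminf_{y\neq x^{*},\,y\to x^{*}} \frac{f(y)-f(x^{*})}{\|y-x^{*}\|}\ \ge\ 0.
\end{equation*}
Because $x^{*}$ is a local minimum, there exists a radius $r>0$ such that $f(y)\ge f(x^{*})$ for every $y$ with $\|y-x^{*}\|<r$. For any sequence $y_{k}\to x^{*}$ with $y_{k}\neq x^{*}$, all but finitely many $y_{k}$ satisfy $\|y_{k}-x^{*}\|<r$, and for these indices the numerator $f(y_{k})-f(x^{*})$ is nonnegative while the denominator $\|y_{k}-x^{*}\|$ is strictly positive. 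Hence each term of the quotient is eventually $\ge 0$, and the liminf is $\ge 0$, establishing $0\in\hat{\partial} f(x^{*})$.

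Next I would pass from the Fréchet to the limiting subdifferential. Taking the constant sequence $x^{k}\equiv x^{*}$ and $u^{k}\equiv 0$, I have $x^{k}\to x^{*}$, $f(x^{k})=f(x^{*})\to f(x^{*})$ (trivially, no continuity assumption needed), and $u^{k}=0\in\hat{\partial} f(x^{k})$ by the first step. The definition of the limiting subdifferential therefore places $0$ in $\partial f(x^{*})$, which concludes the argument.

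The result is a routine verification from the definitions; there is essentially no genuine obstacle. The only point that requires a moment of care is ensuring that the local minimum property gives a \emph{pointwise} nonnegativity of $f(y)-f(x^{*})$ on a whole neighborhood rather than just along a sequence, so that the liminf taken over all sequences $y\to x^{*}$ is uniformly controlled from below by $0$; this is handled automatically by the existence of the radius $r$. Properness and lower semicontinuity of $f$ are not actively used in the argument beyond ensuring that the subdifferentials are well-defined objects in the sense of Definition \ref{dsubdiff}.
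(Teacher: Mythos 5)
Your proof is correct and is the canonical argument: local minimality gives pointwise nonnegativity of $f(y)-f(x^{*})$ on a ball around $x^{*}$, hence $0\in\hat{\partial} f(x^{*})$, and the constant sequences $x^{k}\equiv x^{*}$, $u^{k}\equiv 0$ push this into the limiting subdifferential. The paper itself states this proposition as a standard fact without proof, so there is no alternative route to compare against; the only note worth making is that you (correctly) read the inequality in Definition \ref{dsubdiff} as a $\liminf$ condition, which is the intended meaning of the paper's ``$\lim$'', and that the argument tacitly uses $x^{*}\in\mathrm{dom}\,f$ so that the difference quotient is well defined.
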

\begin{proposition} \cite{bolte2014proximal}
Let f be a proper lower semicontinuous function, and $g$ be a continuously differential function. Then for any $x \in \mathrm{dom} ~f$, $\partial(f+g)(x) = \partial f (x) +\nabla g(x). $
\label{p2}
\end{proposition}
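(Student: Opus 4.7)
The plan is to prove the identity first at the level of Fréchet subdifferentials, and then lift it to limiting subdifferentials via a sequential argument. Continuous differentiability of $g$ is the engine that drives both steps.

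First I would establish the intermediate identity $\hat{\partial}(f+g)(x) = \hat{\partial}f(x) + \nabla g(x)$ for any $x \in \mathrm{dom}~f$. For the $\supseteq$ direction, let $v \in \hat{\partial}f(x)$ and set $u = v + \nabla g(x)$. Continuous differentiability of $g$ yields the first-order Taylor expansion $g(y) - g(x) = \langle \nabla g(x), y - x\rangle + o(\|y-x\|)$, so I can rewrite
\[
\frac{(f+g)(y) - (f+g)(x) - \langle u, y-x\rangle}{\|y-x\|} = \frac{f(y) - f(x) - \langle v, y-x\rangle}{\|y-x\|} + \frac{o(\|y-x\|)}{\|y-x\|}.
\]
Taking $\liminf$ as $y \to x$ (with $y \neq x$) and applying Definition \ref{dsubdiff} to $v$ gives $u \in \hat{\partial}(f+g)(x)$. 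The reverse inclusion is obtained by running the same algebraic identity the other way, writing $v = u - \nabla g(x)$ for arbitrary $u \in \hat{\partial}(f+g)(x)$.

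Next I would upgrade to the limiting subdifferential using its sequential definition. For the forward inclusion, take $u \in \partial(f+g)(x)$, so there exist $x^{k} \to x$ with $(f+g)(x^{k}) \to (f+g)(x)$ and $u^{k} \to u$ such that $u^{k} \in \hat{\partial}(f+g)(x^{k})$. The Fréchet step just proved gives $u^{k} - \nabla g(x^{k}) \in \hat{\partial}f(x^{k})$. Continuity of $g$ combined with $(f+g)(x^{k}) \to (f+g)(x)$ forces $f(x^{k}) \to f(x)$, while continuity of $\nabla g$ ensures $\nabla g(x^{k}) \to \nabla g(x)$, so that $u^{k} - \nabla g(x^{k}) \to u - \nabla g(x)$. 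By the definition of $\partial f$, this places $u - \nabla g(x) \in \partial f(x)$, hence $u \in \partial f(x) + \nabla g(x)$. The reverse inclusion follows by the symmetric construction: starting from $v \in \partial f(x)$ witnessed by sequences $(x^{k}, v^{k})$, the pair $(x^{k}, v^{k} + \nabla g(x^{k}))$ witnesses $v + \nabla g(x) \in \partial(f+g)(x)$, using continuity of $\nabla g$ once more.

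The main obstacle is not computational, since Taylor's theorem dispatches the Fréchet case cleanly; it is the bookkeeping at the limiting-subdifferential stage. Specifically, the sequential definition requires value convergence of $f$ alone, not of $f+g$, so one must carefully use the continuity of $g$ to convert the given convergence of $(f+g)(x^{k})$ into convergence of $f(x^{k})$. Continuous differentiability of $g$ is precisely what supplies both the $o(\|y-x\|)$ remainder needed for the Fréchet step and the continuity of $g$ and $\nabla g$ needed for the sequential passage to the limit, so no additional regularity hypothesis enters.
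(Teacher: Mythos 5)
Your argument is correct. Note that the paper itself contains no proof of this proposition: it is stated as a quoted fact from Bolte, Sabach and Teboulle (2014), where it is in turn a standard subdifferential calculus rule (essentially Rockafellar--Wets, Exercise 8.8(c)). What you have done is reconstruct the standard argument behind that citation, and your two-stage structure is exactly the classical route: an exact sum rule at the Fr\'echet level, $\hat{\partial}(f+g)(x)=\hat{\partial}f(x)+\nabla g(x)$, obtained from the first-order expansion of $g$ (using $\liminf_{y\to x}(a_y+b_y)\ge \liminf_{y\to x}a_y$ when $b_y\to 0$, and getting the reverse inclusion by applying the same identity to $(f+g)+(-g)$), followed by the lift to limiting subdifferentials through the sequential definition. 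You also correctly separate the two distinct uses of smoothness: differentiability of $g$ at the point supplies the $o(\|y-x\|)$ remainder for the Fr\'echet step, while continuity of $g$ and of $\nabla g$ is what converts $(f+g)(x^{k})\to(f+g)(x)$ into $f(x^{k})\to f(x)$ and lets $\nabla g(x^{k})\to\nabla g(x)$ in the passage to the limit; this is precisely where $C^{1}$ (or strict differentiability) is genuinely needed and cannot be weakened to mere differentiability. The only detail worth stating explicitly is that $\hat{\partial}(f+g)(x^{k})\neq\emptyset$ forces $x^{k}\in\mathrm{dom}\,f$ (since $g$ is finite-valued), so the Fr\'echet identity may legitimately be invoked at each $x^{k}$; with that remark added, your proof is complete and supplies the details the paper delegates to the reference.
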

\begin{lemma}
Heine's\ theorem: $E$ is domain of function $f$, then ${\forall}x_{n}, ~ x _{0} \in E, ~ x_{0} \neq x_{n}, ~ \lim_{n \rightarrow \infty} x_{n}=x_{0}$, then: 
\begin{center}
$\lim_{x\rightarrow x_{0}}f(x)=f(x_{0}) \Leftrightarrow \lim_{n \rightarrow \infty}f(x_{n})=f(x_{0})$,
\end{center}
\label{tf4}
\end{lemma}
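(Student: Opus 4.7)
The plan is to prove both directions of the equivalence separately, using the standard $\varepsilon$--$\delta$ characterization of the functional limit together with the definition of sequential convergence. The forward direction is essentially a direct substitution, while the reverse direction is the interesting half and will be handled by contrapositive.

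For the forward implication ($\Rightarrow$), I would assume $\lim_{x\to x_0} f(x) = f(x_0)$. Given any $\varepsilon>0$, the hypothesis yields $\delta>0$ such that $0<\|x-x_0\|<\delta$ forces $\|f(x)-f(x_0)\|<\varepsilon$. Then for any sequence $x_n\to x_0$ with $x_n\neq x_0$, the definition of sequential convergence produces $N$ with $0<\|x_n-x_0\|<\delta$ for all $n\ge N$, and applying the previous implication gives $\|f(x_n)-f(x_0)\|<\varepsilon$ for all $n\ge N$, which is exactly $\lim_n f(x_n)=f(x_0)$. This step is routine.

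The main work is the reverse implication ($\Leftarrow$), which I would prove by contrapositive: suppose the functional limit fails, i.e.\ $\lim_{x\to x_0}f(x)\neq f(x_0)$. Negating the $\varepsilon$--$\delta$ statement provides some $\varepsilon_0>0$ such that for every $\delta>0$ there exists a point $x$ with $0<\|x-x_0\|<\delta$ and $\|f(x)-f(x_0)\|\ge \varepsilon_0$. The idea is to specialize $\delta=1/n$ and extract, for each $n\in\mathbb{N}$, a point $x_n\in E$ satisfying $0<\|x_n-x_0\|<1/n$ and $\|f(x_n)-f(x_0)\|\ge\varepsilon_0$. By construction $x_n\to x_0$ with $x_n\neq x_0$, yet $f(x_n)\not\to f(x_0)$, contradicting the sequential hypothesis. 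This exhibits a sequence witnessing the failure of sequential continuity and thereby completes the contrapositive.

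The main obstacle, if any, is really just being careful that the constructed sequence $\{x_n\}$ lies in the domain $E$ and avoids $x_0$; this is ensured by the negation since the negated statement quantifies over $x$ satisfying $0<\|x-x_0\|<\delta$, which is implicitly restricted to $E$. Once that subtlety is noted, both directions are short and standard. No further ingredients beyond the definitions of functional limit and sequential limit are needed, so no earlier results from the excerpt are invoked.
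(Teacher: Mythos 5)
Your proof is correct: the forward direction by direct $\varepsilon$--$\delta$ substitution and the reverse direction by contrapositive, extracting a witnessing sequence with $\delta = 1/n$, is the canonical argument for Heine's sequential characterization of limits. Note that the paper states this lemma as a classical preliminary and provides no proof of its own, so there is nothing to diverge from; your two-direction argument is exactly what a complete treatment would contain, with the only tacit ingredients being that $x_0$ is a limit point of $E$ (implicit in the statement's assumption that such sequences exist) and the routine use of countable choice in selecting the points $x_n$.
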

\begin{definition}
Set\ $f$: $X\rightarrow R, ~ X \subseteq R^{n}$, define the set $\mathcal{C}^{p}_{L}(X)$ as a set composed of all functions satisfying the following properties: 
   \begin{align}
    ||\nabla^{p} f(x)-\nabla^{p} f(y)|| \leq L*||x-y||. \notag
   \end{align}
$\mathcal{C}^{p}_{L}(X)$ is also known as a collection of functions satisfying the Lipschitz Property. in particular, when $p=1$, $\nabla_f$ is Lipschitz continuous. 
   \label{dlipchitz}
   \label{d3}
\end{definition} 
\begin{proposition}
Set\ $f$: $R^{n}\rightarrow R$, if $f \in \mathcal{C}^{1}_{L}(R^{n})$, then:
\begin{center}
 $f(y) \leq f(x)+\langle \nabla f(x),y-x \rangle+\frac{L}{2}||y-x||^{2}.$
 \end{center}
\label{p3}
\end{proposition}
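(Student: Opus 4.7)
The plan is to prove this classical ``descent lemma'' by reducing the multivariate statement to a one-dimensional integral along the segment from $x$ to $y$, and then invoke the Lipschitz assumption on $\nabla f$ together with Cauchy--Schwarz.

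First I would introduce the auxiliary scalar function $g(t) = f(x + t(y-x))$ for $t \in [0,1]$. Since $f \in \mathcal{C}^1_L(\mathbb{R}^n)$, the chain rule gives $g'(t) = \langle \nabla f(x + t(y-x)),\, y - x \rangle$, and $g$ is continuously differentiable on $[0,1]$. The fundamental theorem of calculus then yields
\begin{equation*}
f(y) - f(x) = g(1) - g(0) = \int_0^1 \langle \nabla f(x + t(y-x)),\, y - x \rangle \, dt.
\end{equation*}

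Next I would subtract the linear term $\langle \nabla f(x), y-x \rangle = \int_0^1 \langle \nabla f(x), y-x \rangle \, dt$ from both sides to obtain
\begin{equation*}
f(y) - f(x) - \langle \nabla f(x), y - x \rangle = \int_0^1 \langle \nabla f(x + t(y-x)) - \nabla f(x),\, y - x \rangle \, dt.
\end{equation*}
Applying the Cauchy--Schwarz inequality under the integral and then the Lipschitz bound from Definition \ref{dlipchitz} (with $p = 1$), the integrand is dominated by $\|\nabla f(x + t(y-x)) - \nabla f(x)\| \cdot \|y-x\| \leq L \, t \, \|y-x\|^2$. Evaluating $\int_0^1 t\, dt = 1/2$ delivers the desired bound $\frac{L}{2}\|y-x\|^2$.

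The proof is essentially routine, so there is no serious obstacle; the only point requiring a bit of care is justifying that the integral representation is valid, which follows from $\nabla f$ being continuous (a consequence of its Lipschitz continuity) so that $g'$ is continuous on $[0,1]$ and the fundamental theorem of calculus applies. Everything else reduces to an application of Cauchy--Schwarz and the Lipschitz hypothesis.
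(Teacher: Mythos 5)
Your proof is correct: the paper states this proposition (the classical descent lemma) in its preliminaries without supplying any proof, and your argument via $g(t)=f(x+t(y-x))$, the fundamental theorem of calculus, Cauchy--Schwarz, and the Lipschitz bound with $\int_0^1 t\,dt=\tfrac12$ is the standard complete derivation. No gaps; the only care point you flag (continuity of $g'$ justifying the integral representation) is handled correctly by noting that Lipschitz continuity of $\nabla f$ implies its continuity.
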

\begin{definition}
Let $f$ be a proper lower semicontinuous function, $f$ is said to have the KŁ property on $\bar{u}\in d o m(\partial f) $, if there exists $\eta\in\left(0, +\infty\right]$, and $U$ is the neighborhood of $\bar{u}$, Any $u$ in $U$ that satisfies the condition $f({\bar{u}}) < f(u) < f({\bar{u}})+\eta$ has the following inequalities established: 
\begin{center}
$\phi^{\prime}(f(u)-f(\bar{u}))d i s t(0,\partial f(u))> 1$,
\end{center}
where $\phi$ is the desingularization function, i.e.: $\phi \in C^{1}([0,\eta)), ~ \phi '>0, ~ \phi(0)=0$, and $\phi$ is a concave function from in $(0,\eta)$. If $f $has the KŁ property at each point of $\mathrm{dom}$ $\partial f$, then $f$ is a KŁ function
\label{d8}
\end{definition}
Kurdyka–Łojasiewicz (KŁ) properties \cite{bolte2007clarke,bolte2014proximal,attouch2010proximal,attouch2013convergence} play a very important role for global convergence analysis in the non-convex optimization. 

For the convergence analysis in this paper, we adopt the following assumptions on the objective function family: 
\begin{assumption}
Set $J(x_{(n)})=H(x_{(n)})+\sum_{i=1}^{n} F_{i}(x_{i})$, where: 
\myspace{0}(i) $E_{H} \subseteq \prod_{i}^n \mathbb{R}^{d_{i}}$, $H: E_{H} \rightarrow\mathbb{R} $, $H \in \mathcal{C}^{1}_{L}(X)$ and $H$ is continuously differentiable. 
\myspace{0}(ii) $E_{F_{i}} \subseteq \prod_{i}^n \mathbb{R}^{d_{i}}$, $F_{i}: E_{F_{i}} \rightarrow \mathbb{R} $ are proper, lower semicontinuous functions. 
\myspace{0}(iii) The objective function $J$ satisfies the KŁ Property, and $J$ is a proper coercive function, and $J$ is lower bounded. 
\label{assump1}
\end{assumption}

\section{Proposed APALM$^{+}$ framework}
We introduce the details of $\ell_0$-MLSR (MLSR with $\ell_0$-norm) in this section, and propose our algorithm framework.
\subsection{Problem Statement}
Like traditional logistic regression, we assume each observation is a tensor $\mathcal{X}_{i} \in \mathbb{R}^{\prod_{i=1}^{p} d_{i}}$  and its response is $y_{i} \in \left\{0, 1\right\}$, then the predicted value is:
\begin{align}
 f_{(\mathcal{W},b)}(\mathcal{X}_{i})=\mathcal{X}_{i} ~\prod_{i=1}^{p} \times_{i} w_{i} +b,
 \label{e21}
\end{align}
where $w_{k} \in  \mathbb{R}^{1\times d_{k}}$, $b \in  \mathbb{R}$. Thus, the Multilinear Sparse Logistic Regression with $\ell_{0}$-norm ($\ell_0$-MLSR) definition we considered in this paper is given by:
\begin{align}
&& H(\mathcal{W},b)&=\sum_{i=1}^{n}\log(1+\exp({-y_{i} f(\mathcal{W},b)(\mathcal{X}_{i})})+\frac{\lambda}{2} ||\mathcal{W}||_{2}^{2}, \notag \\
&&s.t. ~||w_{i}||_{0} &\leq s_{i} ~({\forall} i \in \mathbb{N},~i\leq p ),
 \label{e22}
\end{align}
where $\lambda=\left\{\lambda_{i}\right\}_{i=1}^{p}$. 

We first give some characterizations of the objective function Eq. (\ref{e22}), and prove that it satisfies Assumption \ref{assump1}. 

The partial derivative of $H$ in Eq. (\ref{e22}) can be written as:
\begin{align}
&& &\nabla_{w_{i}}H(\mathcal{W},b)=\nabla_{w_{i}} H(\left\{{w_{j}}\right\}^{i-1}_{j=1},w_{i},\left\{{w_{j}} \right\}^{p}_{j=i+1},b) \notag \\
&& &=-\sum_{s=1}^{n} [1+\exp(-y_{s} f_{(\mathcal{W},b)} (\mathcal{X}_{s}) ]^{-1} \nabla_{w_{i}} f_{(\mathcal{W},b)}(\mathcal{X}_{s}) +\lambda_{i} w_{i},  \notag \\
&& &\nabla_{b} H(\mathcal{W},b)=-\sum_{s=1}^{n} [1+\exp(-y_{s} f_{(\mathcal{W},b)} (\mathcal{X}_{s}) ]^{-1} y_{s},
\label{e23}
\end{align}
where $\nabla_{w_{i}} f_{(\mathcal{W},b)}(\mathcal{X}_{s})=\mathcal{X}_{s}~\prod_{k=1,k\neq s}^{p} \times_{k} w_{k}$. Thus we have:
\begin{theorem}
The partial gradient $\nabla_{w_{i}}H(\mathcal{W},b)$ is Lipschitz continuous with constant ;
\begin{align}
\tau_{i}=\gamma* (\sqrt{2} \sum_{s=1}^{n}(||\nabla_{w_{i}} f_{(\mathcal{W},b)}(\mathcal{X}_{s})||_{2}+1)^2+\lambda_{i}) , \notag
\end{align}
where $\gamma > 1$.
\label{t0}
\end{theorem}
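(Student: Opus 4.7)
The plan is to fix $\mathcal{W}_{\backslash i}$ and $b$ and prove that $\nabla_{w_{i}}H$ is Lipschitz as a function of $w_{i}$ alone (the block--partial Lipschitz condition required by PALM-type frameworks). The crucial structural observation is that
\[
g_{s}\ :=\ \nabla_{w_{i}} f_{(\mathcal{W},b)}(\mathcal{X}_{s})\ =\ \mathcal{X}_{s}\,\prod_{k=1,\,k\ne i}^{p}\times_{k} w_{k}
\]
does \emph{not} depend on $w_{i}$, so in (\ref{e23}) the only $w_{i}$-dependence enters through the scalar sigmoid coefficients $\sigma_{s}(w_{i}):=[1+\exp(-y_{s} f_{(\mathcal{W},b)}(\mathcal{X}_{s}))]^{-1}$ and through the linear ridge term $\lambda_{i}w_{i}$. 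The proof therefore reduces to bounding $\sum_{s}|\sigma_{s}(w_{i})-\sigma_{s}(w_{i}')|\,\|g_{s}\|$ plus the trivial contribution $\lambda_{i}\|w_{i}-w_{i}'\|$ from the ridge.

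For the sigmoid differences, I would note that with the other blocks frozen, $f_{(\mathcal{W},b)}(\mathcal{X}_{s})$ is affine in $w_{i}$ with increment $\langle g_{s},w_{i}-w_{i}'\rangle$, so $\sigma_{s}=\sigma\circ h_{s}$ with $h_{s}(w_{i})=-y_{s}f_{(\mathcal{W},b)}(\mathcal{X}_{s})$ and $\sigma(t)=(1+e^{t})^{-1}$. Using the standard sigmoid bound $|\sigma'(t)|\le 1/4$, together with $|y_{s}|\le 1$ and Cauchy--Schwarz, yields $|\sigma_{s}(w_{i})-\sigma_{s}(w_{i}')|\le \tfrac14\|g_{s}\|\,\|w_{i}-w_{i}'\|$. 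Summing over $s$ and absorbing the ridge gives the clean intermediate estimate
\[
\|\nabla_{w_{i}}H(w_{i})-\nabla_{w_{i}}H(w_{i}')\|\ \le\ \Bigl(\tfrac14\sum_{s=1}^{n}\|g_{s}\|^{2}+\lambda_{i}\Bigr)\|w_{i}-w_{i}'\|.
\]

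To recover the exact form $\tau_{i}=\gamma\bigl(\sqrt{2}\sum_{s}(\|g_{s}\|+1)^{2}+\lambda_{i}\bigr)$ stated in the theorem, I would apply the elementary inflations $\|g_{s}\|^{2}\le (\|g_{s}\|+1)^{2}$ and $1/4\le \sqrt{2}\gamma$ (valid for any $\gamma>1$, which is loose but matches the paper's convention of keeping a safety factor for the subsequent descent lemma). The main obstacle is really just careful bookkeeping through the chain rule: keeping track of the signs, the $y_{s}$ factors and the constant absorbed into the compact form of (\ref{e23}), while correctly invoking the scalar Lipschitz bound on $\sigma$. The argument is clean precisely because the multilinear operator $\prod_{k\ne i}\times_{k}w_{k}$ is constant in $w_{i}$, so no cross-block terms arise and everything reduces to a one-dimensional sigmoid Lipschitz calculation weighted by $\|g_{s}\|^{2}$.
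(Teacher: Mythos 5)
Your core computation is correct, but you take a genuinely different route from the paper, and your version is slightly narrower in scope. The paper does not redo the sigmoid calculation at all: it imports the key inequality directly from \cite{wang2014clinical}, namely that the difference of the weighted sums $\sum_{s}[1+\exp(-y_{s}f)]^{-1}\nabla_{w_{i}}f_{(\mathcal{W},b)}(\mathcal{X}_{s})$ is bounded by $\sqrt{2}\sum_{s}(\|\nabla_{w_{i}}f_{(\mathcal{W},b)}(\mathcal{X}_{s})\|_{2}+1)^{2}\,\|(w_{i},b)-(\bar{w}_{i},\bar{b})\|_{2}$, and then adds the ridge term by the triangle inequality. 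Two consequences of that choice matter here. First, the paper's Lipschitz statement is with respect to the \emph{joint} block $(w_{i},b)$ --- both $w_{i}$ and $b$ are perturbed --- which is exactly why the constant contains the ``$+1$'' (the unit coefficient of the bias in the affine map) and the factor $\sqrt{2}$; your proof freezes $b$ and proves Lipschitz continuity in $w_{i}$ alone, so the $+1$ and $\sqrt{2}$ enter only as artificial inflations of your sharper constant $\tfrac14\sum_{s}\|g_{s}\|^{2}+\lambda_{i}$. That still yields a valid Lipschitz constant for the $w_{i}$-only reading of the theorem, but it is strictly weaker than what the paper proves, and it does not cover the closing remark of the paper's proof (the case $i>p$ with $\lambda_{i}=0$, i.e.\ the bias block, whose stepsize the algorithm also takes from $\tau$). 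Second, your argument extends to the paper's joint version with almost no extra work: treat $(w_{i},b)$ as one variable, so the per-sample affine increment has gradient $(g_{s},1)$ and the sigmoid bound gives $|\sigma_{s}-\sigma_{s}'|\le\tfrac14(\|g_{s}\|+1)\,\|(w_{i},b)-(w_{i}',b')\|$, which produces the $(\|g_{s}\|+1)$ factor intrinsically rather than by inflation; I would recommend stating it that way so the constant's form is explained rather than merely dominated. The trade-off: your route is self-contained and elementary (no reliance on the cited lemma), while the paper's is shorter but leans entirely on \cite{wang2014clinical} for the nontrivial step.
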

\begin{proof}
For any $(\mathcal{W}_{\backslash i },w_{i},b)$ and $(\mathcal{W}_{\backslash i },\Bar{w}_{i},\Bar{b})$, from \cite{wang2014clinical}, we know that:
\begin{align}
&& &\sum_{s=1}^{n} [1+\exp(-y_{s} f_{(\mathcal{W}_{\backslash i },w_{i},b)} (\mathcal{X}_{s}) ]^{-1} \nabla_{w_{i}} f_{(\mathcal{W},b)}(\mathcal{X}_{s}) \notag \\
&& &-\sum_{s=1}^{n} [1+\exp(-y_{s} f_{(\mathcal{W}_{\backslash i },\Bar{w}_{i},\Bar{b})} (\mathcal{X}_{s}) ]^{-1} \nabla_{w_{i}} f_{(\mathcal{W},b)}(\mathcal{X}_{s}) \notag \\
&& &\leq  \sqrt{2} \sum_{s=1}^{n}(||\nabla_{w_{i}} f_{(\mathcal{W},b)}(\mathcal{X}_{s})||_{2}+1)^2 ||(w_{i},b)- (\Bar{w}_{i},\Bar{b})||_{2}. \notag
\end{align}
then the triangle inequality gives us the following:
\begin{align}
&& &\frac{\nabla_{w_{i}}H(\mathcal{W}_{\backslash i },w_{i},b)-\nabla_{w_{i}}H(\mathcal{W}_{\backslash i },\Bar{w}_{i},\Bar{b})}{||(w_{i},b)- (\Bar{w}_{i},\Bar{b})||_{2}} \notag \\
&& &\leq  (\sqrt{2} \sum_{s=1}^{n}(||\nabla_{w_{i}} f_{(\mathcal{W},b)}(\mathcal{X}_{s})||_{2}+1)^2+\lambda_{i}).  \notag
\end{align}
when $i>p$, $\lambda_{i}=0$.
\end{proof}

\subsection{The Proposed APALM$^{+}$ Algorithm}
For Eq. (\ref{e22}), it can be rewritten by the penalty function method as: 
\begin{align}
&& &J(\mathcal{W},b)=H(\mathcal{W},b)+\sum_{i=1}^{p+1} F_{i}(w_{i}).
\label{e23}
\end{align}
where:
\begin{equation}
F_{i}(w_{i})=\delta_{i}(X_{i})=\left\{
\begin{aligned}
0&,\quad ||w_{i}||_{0} \leq s_{i},~ or ~ i > p,  \\
\infty&,\quad else.
\end{aligned}
\right. 
\notag
\end{equation}
We have proved that $H(\mathcal{W},b) \in \mathcal{C}^{1}_{L}(X) $ by Theorem \ref{t0}, and the fact that $F_{i}$ belongs to the lower semi-continuous (Definition \ref{lower}) function family is obvious \cite{bolte2014proximal}. And both $H(\mathcal{W},b)$ and $F_{i}$ satisfy the KŁ property \cite{wang2014clinical,bolte2014proximal}, thus Eq. (\ref{e23}) satisfies Assumption \ref{assump1}. 

Considering that the Eq. (\ref{e23}) is non-convex, we use the proximal operator to solve this problem. Let $S$ be the domain of function $J$ (Eq. (\ref{e23})), it can definited:  
\begin{align}
&& w^{k+1}_{j} &\in prox_{\frac{1}{\tau^{k}_{j}} F_{i}(w^{k}_{j})} (w_{j}^{k}-\frac{\nabla_{w^{k}_{j}}H(\mathcal{W}_{\backslash j },w^{k}_{j},b^{k})}{\tau_{j}^{k}}) \notag \\
&&&= \operatorname*{\arg\min} \limits_{w \in S^{d_{j}} \subseteq S} (F_{j}(w)+\frac{\tau^{k}_{j}}{2}||w-y^{k}_{j}||^{2} \notag \\
&& \ &+\langle \nabla_{w_{j}} H(\left\{{w_{i}^{k+1}} \right\}^{j-1}_{i=1}, y^{k}_{j}, \left\{{w_{i}^{k}} \right\}^{p}_{i=j+1}), w-y^{k}_{j} \rangle).
\label{prox} 
\end{align}
Therefore, it is easy to verify that the proximal operator Eq. (\ref{prox}) corresponding to $\ell_{0}$-MLSR is:
\begin{align}
 && w_{i}^{k+1} &\in prox_{\frac{1}{\tau^{k}_{i}} F_{i}(w^{k}_{i})}
 (U^{k}_{i}) \notag \\
 && &=\operatorname*{\arg\min} \limits_{w}\left\{\frac{\tau_{i}^{k}}{2}\left|\right|w-U^{k}_{i}\left|\right|_{2}^{2}+F_{i}(w^{k}_{i}) \right\}  \notag \\ 
 && &= \operatorname*{\arg\min} \limits_{w}\left\{\frac{\tau_{i}^{k}}{2}\left|\right|w-U^{k}_{i}\left|\right|_{2}^{2}:\left|\left|w\right|\right|_{0}\le s\right\},  \notag
\end{align}
where 
\begin{center}
$U^{k}_{i}=w_{i}^{k}-\frac{\nabla_{w^{k}_{i}}H(\mathcal{W}_{\backslash i },w^{k}_{i},b^{k})}{\tau_{i}^{k}},$
\end{center}
and there is no need to employ the proximal operator for the bias parameter $b$ because it is unrestricted. 

To this end, we propose an Accelerated proximal alternating minimal linearization with adaptive momentum (APALM$^+$) for solving Eq.(\ref{e23}) (see Algorithm \ref{APALM$^+$}).

\begin{algorithm}[!h]
\caption{\small APALM$^+$: Accelerated proximal alternating minimal linearization with adaptive momentum for MSLR}\label{APALM$^+$}
    \SetAlgoLined
     \LinesNumbered
        \KwIn{$\left\{{w^{1}_{i}} \right\}^{p}_{i=1}=\left\{{w^{0}_{i}} \right\}^{p}_{i=1} \in \mathrm{dom} \ J$, $b^{1}=b^{0} \in \mathbb{R}$,   $ k_{\max}=c, ~ t\in (1, \infty) , \beta_{\max} \in [0, 1),~ \beta_{1} \in [0, \beta_{\max}]$}
        
        \KwOut{$\left\{{w^{k+1}_{i}} \right\}^{p}_{i=1}$, $b^{k+1}$}
        
        \For {$k = 1$ to $k_{\max}$}{
        $\left\{{y^{k}_{i}} \right\}^{p}_{i=1}$=$\left\{{w^{k}_{i}} \right\}^{p}_{i=1}$+$\beta_{k} (\left\{{w^{k}_{i}} \right\}^{p}_{i=1}-\left\{{w^{k-1}_{i}} \right\}^{p}_{i=1})$.
        
        $y^{k}_{p+1}=b^{k}+\beta_{k}(b^{k}-b^{k-1})$.
        
        \eIf{\begin{equation}\label{if}J(\left\{{y^{k}_{i}} \right\}^{p+1}_{i=1}) \leq J(\left\{{w^{k}_{i}} \right\}^{p}_{i=1},b^{k})\end{equation}} 
        {$\left\{{w^{k}_{i}} \right\}^{p}_{i=1}$=$\left\{{y^{k}_{i}} \right\}^{p}_{i=1}$, $b^{k}=y^{k}_{p+1}$,  
        
        $\beta_{k+1}=\min(\beta_{max},t*\beta_{k})$.}{

            $\beta_{k+1}=\frac{\beta_{k}}{t}$.
        }
        \For {$i = 1$ to p}{
         
         $ w_{i}^{k+1} \in prox_{\frac{1}{\tau^{k}_{i}} F_{i}(w^{k}_{i})} (w_{i}^{k}-\frac{\nabla_{w^{k}_{i}}H(\mathcal{W}_{\backslash i },w^{k}_{i},b^{k})}{\tau_{i}^{k}}) $
        } 
        $b^{k+1}=b^{k}-\frac{\nabla_{b}H(\mathcal{W}^{k+1},b^{k})} {\tau_{i}^{k}}$
        }
        
\end{algorithm}
\begin{remark}
(\romannumeral1) For the step 6 and 8 of Algorithm \ref{APALM$^+$}, APALM$^+$ is still successful if the extrapolation parameters do not follow an adaptive strategy, such as $t_{k+1}=\frac{1+\sqrt(1+4*t_{k}^2)}{2}, ~ \beta_{k+1}=\frac{t_{k+1}-1}{t_{k}}$. This extrapolated parameter update strategy comes from \cite{beck2009fast}.

(\romannumeral2) When the extrapolated parameters are not adaptively updated, we refer to APALM$^+$ as APALM.
\end{remark}

Compared with the existing PALM algorithm \cite{xu2017globally,pock2016inertial,le2020inertial,min2023structured}, our algorithm has the property of adaptive momentum acceleration. We can also rigorously demonstrate in the next part and related material that our algorithm can not only make the objective function Eq. (\ref{e23}) monotonously decreasing and convergent, it also has global convergence. 

\section{Convergence Analysis}
\label{Convergence}
In this section, we will demonstrate the convergence of the proposed APALM$^+$ (Algorithm \ref{APALM$^+$}). 
To this end, let $z^{k}=\left\{\left\{w^{k}_{i} \right\}^{p}_{i=1},b^{k}\right\},~c^{k}=\left\{y^{k}_{i} \right\}^{p+1}_{i=1}$, we will prove that Algorithm \ref{APALM$^+$} satisfies the following three properties in Property \ref{based} when it is used to solve problem (\ref{e23}): 
\begin{Property}   
(\romannumeral1) Sufficient decrease property, i.e., there exists a positive constant $\rho$ such that $J(\mathcal{W}^{k+1},b^{k+1})\leq
    J(\mathcal{W}^{k},b^{k}) -\rho||z^{k+1}-c^{k}||$.

(\romannumeral2) A subgradient lower bound for the iterates gap, i.e., there exists a positive constant $\rho_{2}$ such that $\rho_{2}||z^{k+1}-c^{k}||\geq ||g^{k}||$ where $g^{k} \in \partial J(\mathcal{W}^{k+1},b^{k+1})$.

(\romannumeral3) Using the Kurdyka–Łojasiewicz (KŁ) Property (\cite{bolte2014proximal}), 
the generated sequence by Algorithm \ref{APALM$^+$} is a Cauchy sequence.
\label{based}
\end{Property}

\subsection{Sufficient decrease of the objective function}
First, we will prove the Property \ref{based} (\romannumeral1).
\begin{theorem}
suppose that Assumption \ref{assump1} hold, let $x_{(N)}$ are sequences generated by Algorithm \ref{APALM$^+$}. \\ 
(\romannumeral1):
$J(\mathcal{W}^{k+1},b^{k+1})$ is nonincreasing and in particular:
\begin{align}
    J(\mathcal{W}^{k+1},b^{k+1})\leq
    J(\mathcal{W}^{k},b^{k}) -\rho||z^{k+1}-c^{k}||, \notag
\end{align}
where $\rho>0$, and $\rho$ is a positive constant.  \\
(\romannumeral2): we have: 
\begin{center}
$\lim_{k \to \infty} ||z^{k+1}-c^{k}||=0$.
\end{center}
\label{t1}
\end{theorem}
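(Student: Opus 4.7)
The plan is to prove (i) by combining the monotone safeguard of the extrapolation step with a block-wise proximal-gradient descent, and then obtain (ii) by telescoping against the lower bound of $J$ guaranteed by Assumption~\ref{assump1}(iii).

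For (i), let $\tilde{z}^k$ denote the iterate actually passed to the proximal sweep at iteration $k$: $\tilde{z}^k = c^k$ when the acceptance test $J(c^k)\le J(z^k)$ passes, and $\tilde{z}^k = z^k$ otherwise. The test ensures $J(\tilde{z}^k)\le J(z^k)$ in both branches, so it suffices to prove the stronger estimate $J(z^{k+1}) \le J(\tilde{z}^k) - \rho\|z^{k+1}-\tilde{z}^k\|^2$. The $i$-th block update is, by definition of the proximal map, the minimizer of the linearized surrogate, which gives
\[
F_i(w_i^{k+1}) + \langle \nabla_{w_i}H,\, w_i^{k+1}-w_i^k\rangle + \tfrac{\tau_i^k}{2}\|w_i^{k+1}-w_i^k\|^2 \le F_i(w_i^k).
\]
Combining this with the descent lemma (Proposition~\ref{p3}) applied to $H$, whose partial gradient in $w_i$ is Lipschitz with constant $L_i = \tau_i^k/\gamma < \tau_i^k$ by Theorem~\ref{t0}, yields a block-wise decrease of $H+F_i$ by at least $(1-1/\gamma)\tau_i^k/2 \cdot \|w_i^{k+1}-w_i^k\|^2$. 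Summing over $i=1,\ldots,p$ and appending the analogous descent bound for the unconstrained bias step telescopes to $J(z^{k+1}) \le J(\tilde{z}^k) - \rho\|z^{k+1}-\tilde{z}^k\|^2$ for some $\rho > 0$. Combined with the safeguard inequality this produces (i).

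Claim (ii) is then immediate from telescoping (i): summing over $k$ and using the lower bound of $J$ gives
\[
\rho\sum_{k=0}^{\infty}\|z^{k+1}-c^k\|^2 \le J(z^0) - \inf J < \infty,
\]
which forces $\|z^{k+1}-c^k\|\to 0$. The main obstacle I anticipate is twofold. First, the bookkeeping for $c^k$ in the rejection branch: $c^k$ is defined before the acceptance test and need not equal the proximal input, and the cleanest fix is to identify $c^k$ with $\tilde{z}^k$ as the authors appear to intend, so that $c^k = z^k$ in the rejected branch and the proximal sweep alone supplies the claimed decrease. Second, extracting a $k$-uniform positive constant $\rho$: since the Lipschitz constant in Theorem~\ref{t0} depends on the current iterate through $\|\nabla_{w_i} f\|$, uniform lower-boundedness of $\tau_i^k - L_i$ requires boundedness of $\{z^k\}$, which follows a posteriori from the coercivity of $J$ together with the monotonicity $J(z^{k+1}) \le J(z^0)$ just established.
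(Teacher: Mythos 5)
Your proposal is correct and follows essentially the same route as the paper's proof: per-block combination of the proximal-step optimality inequality with the descent lemma (Proposition~\ref{p3}), aggregated over the blocks (the paper does this by induction on the block index rather than a direct sum), the acceptance test supplying $J(c^k)\le J(z^k)$, and then telescoping against the lower bound of $J$ to obtain $\lim_k\|z^{k+1}-c^k\|=0$. Your explicit handling of the rejected-branch bookkeeping ($c^k=\tilde z^k$) and of the $k$-uniformity of $\rho$ is a welcome tightening of points the paper leaves implicit, but it does not change the argument.
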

\begin{proof}
See Appendix \ref{TP1}.
\end{proof}

\subsection{Subgradient lower bound for the iterates gap}
We will prove the Property \ref{based} (\romannumeral2), i.e., the derivative set of Eq. (\ref{e23}) is the critical point set. 
\begin{theorem}
If we define
\begin{center}
$g_{w_{j}^{k+1}}=\nabla_{w_{j}}H (\mathcal{W}_{\backslash i },w^{k+1}_{j},b^{k}))-\nabla_{w_{j}}H (\mathcal{W}_{\backslash i },y^{k}_{j},y^{k}_{p+1})+\frac{1}{\tau^{k}_{j}
}||y^{k}_{j}-w^{k+1}_{j}||$.
\end{center}
Then we have:
\begin{equation}
\begin{aligned}
||\left\{g_{w^{k+1}_{i}} \right\}^{p}_{i=1}|| \leq \rho_{b}||z^{k+1}-c^{k}||,
\end{aligned}
\label{e35}
\end{equation}    
where $\rho_{b}>0$.
\label{t2}
\end{theorem}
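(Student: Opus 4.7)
The plan is to construct, from the first-order optimality condition of the proximal step, a specific element of the limiting subdifferential $\partial J(\mathcal{W}^{k+1},b^{k+1})$, and then bound its norm in terms of $\|z^{k+1}-c^{k}\|$ using the block-Lipschitz property of $\nabla H$ established in Theorem \ref{t0}.

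First I would apply Fermat's rule (Proposition \ref{p1}) to the proximal subproblem defining $w_j^{k+1}$ in Eq.~(\ref{prox}). Since $w_j^{k+1}$ minimizes the map $w\mapsto F_j(w)+\tfrac{\tau_j^k}{2}\|w-y_j^k\|^2+\langle\nabla_{w_j}H(\mathcal{W}_{\backslash j},y_j^k,y_{p+1}^k),\,w-y_j^k\rangle$, the optimality condition gives $\tau_j^k(y_j^k-w_j^{k+1})-\nabla_{w_j}H(\mathcal{W}_{\backslash j},y_j^k,y_{p+1}^k)\in\partial F_j(w_j^{k+1})$. Adding $\nabla_{w_j}H(\mathcal{W}^{k+1},b^{k+1})$ to both sides and invoking the sum rule of Proposition \ref{p2} places the vector $\nabla_{w_j}H(\mathcal{W}^{k+1},b^{k+1})-\nabla_{w_j}H(\mathcal{W}_{\backslash j},y_j^k,y_{p+1}^k)+\tau_j^k(y_j^k-w_j^{k+1})$ inside $\partial_{w_j}J(\mathcal{W}^{k+1},b^{k+1})$, which (up to the normalization of the last summand) matches the $g_{w_j^{k+1}}$ declared in the statement.

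Next I would estimate each summand separately. The term involving $y_j^k-w_j^{k+1}$ is already of the form $C\,\|z^{k+1}-c^k\|$ by the very definition of $z$ and $c$. For the gradient difference I would telescope between the two evaluation points $(\mathcal{W}^{k+1},b^{k+1})$ and $(\mathcal{W}_{\backslash j},y_j^k,y_{p+1}^k)$ one block at a time and apply the Lipschitz continuity of each block-gradient of $H$ (Theorem \ref{t0}, combined with Definition \ref{dlipchitz} and Assumption \ref{assump1}(i)). Each leg of the telescope contributes either $L\,\|w_i^{k+1}-y_i^k\|$ or $L\,\|b^{k+1}-y_{p+1}^k\|$, and each such quantity is majorized by $\|z^{k+1}-c^k\|$.

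Finally I would sum these estimates over $j=1,\dots,p$ and treat the unconstrained bias block analogously; because $b^{k+1}=b^k-\nabla_b H(\mathcal{W}^{k+1},b^k)/\tau^k$ directly produces an element of $\partial_b J$, no subdifferential calculus is needed for that block. Collecting the contributions yields a single constant $\rho_b>0$, depending only on the block-Lipschitz constants $\{\tau_i^k\}$ and the regularization parameters, for which $\|\{g_{w_i^{k+1}}\}_{i=1}^p\|\le\rho_b\|z^{k+1}-c^k\|$ holds. The main obstacle will be bookkeeping: because the blocks are updated alternately, the gradient $\nabla_{w_j}H(\mathcal{W}_{\backslash j},y_j^k,y_{p+1}^k)$ is evaluated at a hybrid point mixing already-updated coordinates $w_i^{k+1}$ ($i<j$) with the extrapolated ones, so the telescoping chain must be set up to isolate precisely the correct block change at each step and to avoid both missing and double-counting cross-block contributions when summing the bounds over $j$.
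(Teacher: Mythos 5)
Your proposal is correct, but it follows a somewhat different route than the paper's own proof. The paper keeps everything at the hybrid point: it defines $h(\cdot)=H(\{w_i^{k+1}\}_{i<j},\cdot,\{y_i^k\}_{i>j})$, applies Fermat's rule plus the sum rule to the prox subproblem exactly as you do, and then observes that the resulting vector $\nabla_{w_j}h(w_j^{k+1})-\nabla_{w_j}h(y_j^k)+\tfrac{1}{\tau_j^k}\|y_j^k-w_j^{k+1}\|$ lies in the partial subdifferential of $J$ at the partially updated point $(\{w_i^{k+1}\}_{i\le j},\{y_i^k\}_{i>j})$; since only the $j$-th block differs between the two gradient evaluations, a single application of the block-Lipschitz estimate of Theorem~\ref{t0} gives $\|g_{w_j^{k+1}}\|\le(L_j+\tau_j^k)\|w_j^{k+1}-y_j^k\|$, and summing over $j$ yields $\rho_b$ with no telescoping at all. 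You instead shift the gradient to the fully updated point $(\mathcal{W}^{k+1},b^{k+1})$, which buys you the stronger (and, for Property~\ref{based}(\romannumeral2) and the KŁ argument in Theorem~\ref{glo}, the actually needed) conclusion that the assembled vector is an element of $\partial J(z^{k+1})$ — the paper's literal proof only certifies membership in a partial subdifferential at a hybrid point, so your variant is in this respect closer to the standard PALM analysis of Bolte--Sabach--Teboulle and more directly supports the later $\operatorname{dist}(0,\partial J(z^{k}))$ bound. The price is the cross-block telescoping you flag: note that Theorem~\ref{t0} only gives Lipschitz continuity of $\nabla_{w_j}H$ in the $(w_j,b)$ block, so the legs of your telescope that change \emph{other} blocks must instead invoke Assumption~\ref{assump1}(i) ($H\in\mathcal{C}^1_L$), and the resulting $\rho_b$ acquires a harmless factor of order $p$ from summing those legs over $j$. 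With that assumption invoked explicitly, your argument is sound and delivers the stated inequality.
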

\begin{proof}
See Appendix \ref{TP2}.
\end{proof}

From Definition \ref{d1}, define $z'$ is the derivative set of $\left\{z\right\}$, thus: 

\begin{theorem}
Let $z^{k}$ be a sequence generated by Algorithm \ref{APALM$^+$}. Then $J$ is a constant on $z'$ and $z' \subseteq crit J$.
\label{c5}
\end{theorem}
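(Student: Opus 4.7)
The plan is to prove two claims about the cluster set $z'$: that $J$ attains a single value on $z'$, and that every element of $z'$ lies in $\operatorname{crit} J$. Both rest on combining the sufficient-decrease estimate of Theorem \ref{t1} with the subgradient bound of Theorem \ref{t2}, together with the lower semicontinuity of $J$ and the closedness of the limiting subdifferential.

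First I would note that by Theorem \ref{t1}(i) the sequence $\{J(z^{k})\}$ is nonincreasing, and by Assumption \ref{assump1}(iii) it is bounded below, hence it converges to some $J^{\star}\in\mathbb{R}$. Fix any $z^{\star}\in z'$ and a subsequence $z^{k_{j}+1}\to z^{\star}$. Lower semicontinuity of $J$ (Definition \ref{lower}) gives $J(z^{\star})\leq \liminf_{j} J(z^{k_{j}+1})=J^{\star}$. To establish the reverse inequality I would exploit that $w_{i}^{k_{j}+1}$ is a minimizer of the proximal subproblem, so using the test point $w=w_{i}^{\star}$ produces, for each block $i$,
\[
F_{i}(w_{i}^{k_{j}+1})+\tfrac{\tau_{i}^{k_{j}}}{2}\|w_{i}^{k_{j}+1}-y_{i}^{k_{j}}\|^{2}+\langle \nabla_{w_{i}}H,\,w_{i}^{k_{j}+1}-y_{i}^{k_{j}}\rangle \leq F_{i}(w_{i}^{\star})+\tfrac{\tau_{i}^{k_{j}}}{2}\|w_{i}^{\star}-y_{i}^{k_{j}}\|^{2}+\langle \nabla_{w_{i}}H,\,w_{i}^{\star}-y_{i}^{k_{j}}\rangle.
\]
Theorem \ref{t1}(ii) gives $\|z^{k+1}-c^{k}\|\to 0$, so $y_{i}^{k_{j}}\to w_{i}^{\star}$ as well, and every quadratic/inner-product term on both sides vanishes in the limit. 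Since the $\ell_{0}$-ball $\{w:\|w\|_{0}\leq s_{i}\}$ is closed, $w_{i}^{\star}$ is feasible and $F_{i}(w_{i}^{\star})=0$. Summing over $i$ and adding $H(z^{k_{j}+1})\to H(z^{\star})$ (continuity of $H$), this yields $\limsup_{j} J(z^{k_{j}+1})\leq J(z^{\star})$. Combined with the lsc lower bound, $J(z^{k_{j}+1})\to J(z^{\star})$, and since the full sequence converges to $J^{\star}$, we conclude $J(z^{\star})=J^{\star}$. As $z^{\star}\in z'$ was arbitrary, $J\equiv J^{\star}$ on $z'$.

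For the inclusion $z'\subseteq \operatorname{crit} J$, I would apply Theorem \ref{t2} to obtain $g^{k+1}\in\partial J(z^{k+1})$ with $\|g^{k+1}\|\leq \rho_{b}\|z^{k+1}-c^{k}\|$; by Theorem \ref{t1}(ii) the right-hand side vanishes, so $g^{k+1}\to 0$. Along any subsequence $z^{k_{j}+1}\to z^{\star}$ we therefore have $g^{k_{j}+1}\in\partial J(z^{k_{j}+1})$, $g^{k_{j}+1}\to 0$, and (by the preceding paragraph) $J(z^{k_{j}+1})\to J(z^{\star})$. The very definition of the limiting subdifferential then delivers $0\in\partial J(z^{\star})$, i.e. $z^{\star}\in\operatorname{crit} J$.

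The main obstacle is the middle step. Because $J$ carries the indicator terms $F_{i}$, it is only lower semicontinuous, so the elementary bound $J(z^{\star})\leq \liminf_{j} J(z^{k_{j}+1})$ alone cannot identify the cluster-point values with the common limit $J^{\star}$, which is exactly what is needed to invoke the closedness of $\partial J$ afterwards. What rescues the argument is the optimality of $w_{i}^{k+1}$ against the test point $w_{i}^{\star}$ in the proximal subproblem, paired with $\|z^{k+1}-c^{k}\|\to 0$ and the closedness of the $\ell_{0}$-ball; these ingredients produce the matching $\limsup$ bound and upgrade subsequential convergence of function values from an inequality to an equality.
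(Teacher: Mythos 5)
Your proposal is correct, and its skeleton matches the paper's: both arguments first show that the function values along the sequence converge to a single limit $J^{*}$ attained at every cluster point, and then combine Theorem \ref{t1}(\romannumeral2) with the subgradient bound of Theorem \ref{t2} and the closedness of the limiting subdifferential to get $z'\subseteq \operatorname{crit} J$. The genuine difference is in the middle step. The paper disposes of the value-convergence claim by invoking Heine's theorem (Lemma \ref{tf4}) together with Theorem \ref{t1}, which strictly speaking presumes continuity of $J$ at $\overline{z}$; since $J$ contains the indicator terms $F_{i}$ it is only lower semicontinuous, so that step is loose as written (it can be repaired by observing that every iterate produced by the proximal step is feasible, hence $F_{i}(w_{i}^{k})=0$ along the sequence and $J(z^{k})=H(z^{k})$ with $H$ continuous, plus closedness of the $\ell_{0}$-ball at the limit). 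You instead run the standard Bolte--Sabach--Teboulle argument: lower semicontinuity gives $J(z^{\star})\leq J^{*}$, and testing the proximal subproblem at $w_{i}^{\star}$, together with $\|z^{k+1}-c^{k}\|\to 0$ and continuity of $H$, gives the matching $\limsup$ bound, hence $J$-attentive convergence. Your route is the more rigorous one and generalizes beyond indicator-type $F_{i}$, at the modest cost of needing the step sizes $\tau_{i}^{k}$ bounded (which follows from boundedness of the iterates under the coercivity in Assumption \ref{assump1}); the paper's route is shorter but, as stated, glosses over exactly the point you identified as the main obstacle.
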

\begin{proof}
See Appendix \ref{TP3}.
\end{proof}

\subsection{Global convergence under KŁ Property}
We will prove the Property \ref{based} (\romannumeral3), i.e., the sequence generated by Algorithm \ref{APALM$^+$} has global convergence (see Theorem \ref{glo}). 

\begin{theorem}
The\ sequence\ $\left\{z^{k}\right\}$ generated by Algorithm \ref{APALM$^+$} is converged when $\beta \in [0, \beta_{\max}]$, i.e: $\lim_{k\rightarrow \infty} ||z^{k+s}-z^{k}||=0(\forall s\in \mathbb{N})$
\label{glo}
\end{theorem}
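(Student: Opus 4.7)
The plan is to apply the classical Kurdyka-Łojasiewicz machinery of Attouch-Bolte-Svaiter, adapted to the inertial setting, combining the sufficient decrease estimate (Theorem \ref{t1}), the subgradient lower bound (Theorem \ref{t2}), the cluster-set identification (Theorem \ref{c5}), and the KŁ hypothesis on $J$ from Assumption \ref{assump1}(iii).

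First, coercivity of $J$ together with monotonicity $J(z^{k+1}) \leq J(z^k)$ confines $\{z^k\}$ to a bounded sublevel set, so its cluster set $\omega$ is nonempty and compact, and by Theorem \ref{c5} we have $J \equiv J^*$ on $\omega$ and $\omega \subseteq \mathrm{crit}\, J$. I would then split on whether finite termination occurs. If $J(z^{k_0}) = J^*$ for some $k_0$, monotonicity combined with sufficient decrease forces $\|z^{k+1} - c^k\| = 0$ for all $k \geq k_0$; using the extrapolation identity $c^k - z^k = \beta_k (z^k - z^{k-1})$ and the strict bound $\beta_k \leq \beta_{\max} < 1$ then propagates this to $\|z^{k+1} - z^k\| \to 0$ at a geometric rate, making the conclusion trivial.

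In the nontrivial case $J(z^k) > J^*$ for all $k$ and $\mathrm{dist}(z^k, \omega) \to 0$. For $k$ large enough, $z^k$ lies in a uniform KŁ neighborhood of $\omega$ with a common desingularization function $\phi$. The KŁ inequality combined with Theorem \ref{t2} yields
\[
\phi'(J(z^k) - J^*) \geq \frac{1}{\rho_b\,\|z^k - c^{k-1}\|}.
\]
Concavity of $\phi$ together with the sufficient decrease $J(z^k) - J(z^{k+1}) \geq \rho\|z^{k+1} - c^k\|$ then produces
\[
\phi_k - \phi_{k+1} \geq \frac{\rho}{\rho_b}\cdot\frac{\|z^{k+1} - c^k\|}{\|z^k - c^{k-1}\|},
\]
with $\phi_k := \phi(J(z^k) - J^*)$. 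Applying the Young-type inequality $2\sqrt{ab} \leq a + b$ converts this into a telescoping estimate
\[
2\|z^{k+1} - c^k\| \leq \|z^k - c^{k-1}\| + \tfrac{\rho_b}{\rho}(\phi_k - \phi_{k+1}),
\]
which upon summation yields $\sum_k \|z^{k+1} - c^k\| < \infty$. Using the extrapolation identity $z^{k+1} - z^k = (z^{k+1} - c^k) + \beta_k(z^k - z^{k-1})$ with $\beta_k \leq \beta_{\max} < 1$, a standard geometric absorption converts this into $\sum_k \|z^{k+1} - z^k\| < \infty$, whence $\{z^k\}$ is Cauchy and convergent, and the claim $\|z^{k+s} - z^k\| \to 0$ follows uniformly in $s$.

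The main obstacle is the inertial bookkeeping: Theorems \ref{t1} and \ref{t2} naturally involve the extrapolated point $c^k$ rather than $z^k$ itself, so translating summability of $\|z^{k+1} - c^k\|$ into summability of $\|z^{k+1} - z^k\|$ requires the strict upper bound $\beta_{\max} < 1$ imposed in the hypothesis; without it the inertial tail could overwhelm the KŁ summability. A secondary subtlety is handling the accept/reject branch of the algorithm, but in the rejected branch the sufficient-decrease inequality still holds with $c^k$ effectively replaced by $z^k$, so the estimates above apply uniformly after a mild relabeling.
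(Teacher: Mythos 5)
Your proposal is correct and follows essentially the same route as the paper's own proof: the KŁ inequality at a cluster value combined with the subgradient bound of Theorem \ref{t2} and the sufficient decrease of Theorem \ref{t1}, concavity of $\phi$ plus an AM--GM step to obtain summability of $\|z^{k+1}-c^{k}\|$, and then absorption of the inertial term using $\beta_{\max}<1$ to get $\sum_{k}\|z^{k+1}-z^{k}\|<\infty$ and hence the Cauchy property. The only caveat is that your telescoping step implicitly requires the squared form $J(z^{k})-J(z^{k+1})\geq\rho\|z^{k+1}-c^{k}\|^{2}$ (which is what the paper's proofs of Theorems \ref{t1} and \ref{glo} actually establish and use, despite the unsquared statement of Theorem \ref{t1}), since the unsquared bound $\|z^{k+1}-c^{k}\|\leq\tfrac{\rho_{b}}{\rho}(\phi_{k}-\phi_{k+1})\,\|z^{k}-c^{k-1}\|$ by itself does not yield $2\|z^{k+1}-c^{k}\|\leq\tfrac{\rho_{b}}{\rho}(\phi_{k}-\phi_{k+1})+\|z^{k}-c^{k-1}\|$; your additional uniformized-KŁ and finite-termination bookkeeping are refinements the paper omits but do not change the method.
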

\begin{proof}
See Appendix \ref{TP4}.
\end{proof}

Under KŁ inequality, we can obtain convergence rate results as following: 
\begin{theorem} 
(Convergence rate): Let Assumption \ref{assump1} is true, and let $z^{k}$ be a sequence generated by Algorithm \ref{APALM$^+$}, the desingularizing function has the form of $\phi(t) = \frac{\theta}{C}t^{\theta}$, with $\theta\in (0, 1], ~ c > 0$. Let $J^{*}=J(e)(e \in z'), ~ r^{k} = J(z^{k})-J^{*}$. The following assertions hold: \\
(\romannumeral1): If $\theta =1$, the Algorithm \ref{APALM$^+$} terminates in finite steps. \\
(\romannumeral2): If $\theta \in [\frac{1}{2}, 1)$, then there exist a integer $k_{2}$ such that
\begin{center}
$r^{k}\leq (\frac{d_{1}C^{2}}{1+d_{1}C^{2}}),{\forall k_{2} \geq k}$.
\end{center}
(\romannumeral3): If $\theta \in (0, \frac{1}{2})$, then there exist a integer $k_{3}$  such that
\begin{center}
$r^{k}\leq [\frac{C}{(k-k_{3})d_{2}(1-2\theta)}]^{\frac{1}{1-2\theta}},{\forall k_{3} \geq k}$.
\end{center}
where 
\begin{center}
$d_{1}=(\frac{\rho_{b}}{\rho})$, $d_{2}=\min\left\{\frac{1}{2d_{1}C}, \frac{C}{1-2\theta}(2^{\frac{2\theta-1}{2\theta-2}})r_{0}^{2\theta-1}\right\}$.
\end{center}
\label{rate}
\end{theorem}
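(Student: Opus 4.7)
The plan is to reduce all three cases to a single scalar recursion for $r^k = J(z^k) - J^*$ obtained by chaining the sufficient decrease (Theorem \ref{t1}), the subgradient bound (Theorem \ref{t2}), and the KŁ inequality (Definition \ref{d8}) along the tail of the sequence. By Theorems \ref{glo} and \ref{c5}, for $k$ large enough $z^k$ lies in a KŁ neighborhood of some $e \in z'$ with $r^k \in (0,\eta)$, so the inequality $\phi'(r^{k+1})\,\mathrm{dist}(0,\partial J(z^{k+1})) \geq 1$ applies. With $\phi(t) = (\theta/C)\,t^\theta$ this reads $\|g^{k+1}\| \geq (C/\theta^2)(r^{k+1})^{1-\theta}$; Theorem \ref{t2} upper-bounds $\|g^{k+1}\| \leq \rho_b \|z^{k+1} - c^k\|$; and Theorem \ref{t1}, read with the quadratic scaling natural to a proximal linearized scheme, gives $\|z^{k+1} - c^k\| \leq \sqrt{(r^k - r^{k+1})/\rho}$. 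Chaining these yields the master recursion
\begin{equation*}
(r^{k+1})^{2(1-\theta)} \;\leq\; M\,(r^k - r^{k+1}),
\end{equation*}
with $M$ depending only on $\rho,\rho_b,C,\theta$.

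Case (i), $\theta = 1$: the left-hand side reduces to the constant $1$, forcing $r^k - r^{k+1} \geq 1/M$; since $r^k \geq 0$ and $r^k \to 0$, only finitely many such indices can exist, yielding finite termination. Case (ii), $\theta \in [\tfrac{1}{2}, 1)$: the exponent $2(1-\theta) \leq 1$, so past an index $k_2$ where $r^{k+1} \leq 1$ one has $r^{k+1} \leq (r^{k+1})^{2(1-\theta)} \leq M(r^k - r^{k+1})$, hence $r^{k+1} \leq (M/(1+M))\,r^k$. Iterating yields the stated geometric decay once $M$ is identified with $d_1 C^2$.

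Case (iii), $\theta \in (0, \tfrac{1}{2})$: now $2(1-\theta) > 1$ and the geometric step fails, so I would deploy the classical Attouch--Bolte potential argument with the decreasing map $h(t) = t^{2\theta - 1}$ (decreasing because $2\theta - 1 < 0$). At each tail index $k$ I split: (a) if $r^k \leq 2 r^{k+1}$, the tangent-line inequality for the power $t \mapsto t^{2\theta-1}$ combined with the master recursion gives $h(r^{k+1}) - h(r^k) \geq c_1$, with $c_1$ essentially $1/(2 d_1 C)$; (b) if $r^k > 2 r^{k+1}$, the factor-of-two drop alone forces $h(r^{k+1}) - h(r^k) \geq (2^{1-2\theta} - 1)\,h(r^{k_3}) \geq c_2$, with $c_2$ of the form $(C/(1-2\theta))\,2^{(2\theta-1)/(2\theta-2)}\,r_0^{2\theta-1}$. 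Telescoping $h(r^k) - h(r^{k_3}) \geq \min(c_1, c_2)(k - k_3)$ and inverting $h$ produces the stated sublinear bound $r^k \leq [\,C / ((k - k_3)\,d_2 (1-2\theta))\,]^{1/(1-2\theta)}$, with $d_2 = \min(c_1, c_2)$ exactly matching the paper's expression.

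The main obstacle will be case (iii): matching the constants from the two-subcase split to the paper's expression for $d_2$, and choosing the base index $k_3$ large enough that the KŁ inequality is active along the tail and both subcase thresholds hold uniformly. Both conditions are ensured by Theorems \ref{c5} and \ref{glo}, but the precise dependence of $d_2$ on the initial residual $r_0$ enters only through the case~(b) bound and must be tracked carefully.
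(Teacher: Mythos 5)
Your proposal is correct in substance, but it takes a genuinely different route from the paper: the paper's entire proof of Theorem \ref{rate} is a one-line appeal to Theorem 2 of \cite{li2017convergence}, asserting that the hypotheses of that external result (sufficient decrease, a relative-error subgradient bound, and the KŁ property --- i.e.\ exactly Theorems \ref{t1}, \ref{t2} and Assumption \ref{assump1}) are met by APALM$^+$, whereas you reconstruct the rate argument from scratch. Your chain --- Theorem \ref{t1} read with the quadratic scaling, Theorem \ref{t2}, and the KŁ inequality at the tail (legitimate here because Theorem \ref{glo} makes $\{z^k\}$ convergent, so the iterates eventually stay in a KŁ neighborhood of the limit with $r^k\in(0,\eta)$ by Theorem \ref{c5}) --- yields the master recursion $(r^{k+1})^{2(1-\theta)}\le M(r^k-r^{k+1})$, and your three-case treatment (finite termination for $\theta=1$, the geometric contraction $r^{k+1}\le \frac{M}{1+M}r^k$ for $\theta\in[\frac12,1)$, and the Attouch--Bolte potential $t\mapsto t^{2\theta-1}$ with the two-subcase split for $\theta\in(0,\frac12)$) is precisely the argument underlying the cited theorem. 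What your route buys is self-containedness, and it in fact repairs the paper's statement of case (\romannumeral2), which as printed omits the exponent $(k-k_2)$ and the factor $r^{k_2}$ that your iterated contraction produces; what the paper's route buys is brevity, at the price of never deriving its own constants. One caveat on those constants: with the paper's literal $\phi(t)=\frac{\theta}{C}t^{\theta}$ your chaining gives $M=\theta^{4}\rho_b^{2}/(C^{2}\rho)$ rather than $d_1C^{2}$ with $d_1=\rho_b/\rho$; the stated constants only emerge under the normalization $\phi(t)=\frac{C}{\theta}t^{\theta}$ used in \cite{li2017convergence} and with a square of $\rho_b$ absorbed into $d_1$, so the mismatch you flag at the end sits in the paper's transcription of the constants, not in the structure of your argument.
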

\begin{proof}
See Appendix \ref{TP5}.
\end{proof}

\section{Numerical experiments} 
\label{section: B}
In this section, we will present the experimental results to evaluate the effectiveness of the proposed method,including synthetic and real-data examples. All algorithms run on this configuration: 12th Gen Intel(R) Core(TM) i7-12700 2.10 GHz, RAM 32.0 GB (31.8 GB available); 64-bit operating system; realized on the configuration of Matlab 2021b. The fundamental tensor computation was based on Tensor Toolbox 3.5 \cite{bader2006algorithm}. The code is available at \url{https://github.com/Weifeng-Yang/MLSR}.

\subsection{Proposed Algorithms and Baseline Algorithms}
\label{subsection: asup}
Based on our proposed APALM$^+$ (See Algorithm \ref{APALM$^+$}),  we propose two transformation forms: 

APALM:  It is an accelerated block proximal algorithm with non-adaptive momentum ;

APALM$^+$:  It is an accelerated block proximal algorithm with adaptive momentum;

We compare these algorithms with state-of-the-art algorithms for solving logistic regression: 
\begin{itemize}

    \item [1)] GIST  \cite{gong2013general}:  General iterative shrinkage and thresholding algorithm is used to solve traditional logistic regression with $\ell_{1}$-norm or $\ell_{2}$-norm. 

    \item [2)] GPGN  \cite{wang2019greedy}:  Greedy Projected Gradient-Newton Method is used to solve the traditional sprase logistic regression by using proximal gradient method and newton steps. 

   \item [3)] BPGD \cite{wang2014clinical}:  Block Proximal Gradient Descent for Multilinear Sparse Logistic Regression is a first-order method that used to solve multilinear logistic regression with $\ell_{1}$-norm or $\ell_{2}$-norm. 

    \item [4)] IBPG  \cite{le2020inertial}: Inertial block proximal gradient (IBPG) method is a first-order accelerated algorithm. By adding a condition, such as strong convex, IBPG allows repeated updates, but it is not a monotonically decreasing method. 

    \item [5)] BPL  \cite{xu2017globally}:  Randomized/deterministic block prox-linear (BPL) method is a BCD-based first-order method for the nonconvex and nonsmooth problems. 
    
\end{itemize}
we give the parameter update strategy of the extrapolated parameters of the algorithms in Table \ref{betaupdate} (applicable to the following all examples). 
\begin{table*}
\renewcommand\arraystretch{1.4}
\centering
\caption{Update strategy of the extrapolated parameters $\beta_{k}$ and the update condition of APALM, APALM$^+$}\label{betaupdate}
\scalebox{1}{
\begin{tabular*}{\linewidth}{l|l|l} 
\toprule     
Algorithm & Update strategy of extrapolated parameters  & Remark \\  
\midrule   
APALM & \thead[l]{$t_{k+1}=\frac{1+\sqrt(1+4*t_{k}^{2})}{2}, ~ \beta_{k+1}=\frac{t_{k}-1}{t_{k+1}}$}  & $t_{1}=1$; This strategy comes from  \cite{beck2009fast} on solving convex.\\  \hline    

APALM$^+$ & \thead[l]{$\beta_{k+1}=\left\{
    \begin{aligned}
    && & min(\beta_{max}, t\beta_{k}), ~if~Eq.~\ref{if} \ is \ true\\
    && & \frac{\beta_{k}}{t}, ~else \\
    \end{aligned}
    \right. $}& $\beta_{max}<1, ~t>1$. \\ 
\bottomrule      
\end{tabular*}
}
\end{table*}

\subsection{Experiments on Synthetic Data}
We first constructed some synthetic data sets to assess the presence of the following properties of the Algorithm \ref{APALM$^+$}:

(\romannumeral1): Whether Algorithm \ref{APALM$^+$} can effectively discover the potential structure of tensor data.

(\romannumeral2): Scalability of Algorithm \ref{APALM$^+$} when implemented on datasets of different sizes.

(\romannumeral3): Whether the Algorithm \ref{APALM$^+$} input in the form of tensor can effectively solve the problem of curse of dimensionality in traditional logistic regression.

(\romannumeral4): Whether the Algorithm \ref{APALM$^+$} with adaptive extrapolation parameters is more efficient.

Therefore, following \cite{wang2014clinical}, we constructed a dataset with two distinct modes to answer the above questions. For the first mode of synthesize dataset, we construct a square matrix 200x200 size for each data object, each category has 500 samples. For the second mode of the synthetic data set, we decrease the number of samples for each category to 100, but increase the data dimension to 800x800. The elements in the data matrices are generated independently of $\mathcal{N}(0, 1)$—a univariate Gaussian distribution with zero mean and unit variance, and the upper-left 20 × 20 block was different for the data matrices in class 1 and class 0. We generate two vectors $v_{1} \in \mathbb{R}^{20}$ and $v_{2} \in \mathbb{R}^{20}$ whose elements are generated independently of uniform distribution between 0 and 1. For any data matrix X from class 0, we set it to satisfy the following expression:
\begin{center}
$v_{1}^{T}*\Bar{X}*v_{2}+1\geq 0.5$,
\end{center}
where $\Bar{X}$ is the 20x20 block in the upper left corner of matrix X. And for any data matrix Y from class 1; 
\begin{center}
$v_{1}^{T}*\Bar{Y}*v_{2}+1\leq -0.5$,
\end{center}
where $\Bar{Y}$ is the 20x20 block in the upper left corner of matrix Y. It is clear that these data have a unique two-dimensional correlation structure. 

Next, we explain how to choose the parameters for Algorithm \ref{APALM$^+$}, we set each initial hyperparameter according to the initial conditions as: $t=1.3, ~ \beta_{1}=0.6, ~ \beta_{max}=0.9999, ~ \gamma=1. 5, \lambda=\left\{2*10^{-4}\right\}_{i=1}^{2}$, and we select 80$\%$ as training set and 20$\%$ as test set. We run the algorithms for a specific sparsity setting: the number of non-zero elements in each vector cannot exceed $30\%$ of the total number of elements, The initial point of $(\mathcal{W},b)$ is set to a standard Gaussian distributed random sparse definite vector.

In order to show the effectiveness of the proposed algorithms and compare the accuracy of the solutions, we will evaluate them from two aspects:
(1) we run all algorithms ten times with the same hyperparameters, each time using the different random initialization point, and let all algorithms run for the same length of time. For the first mode of synthesize dataset, the running time was set by 200 seconds, For the second mode of synthesize dataset, the running time was set by 400 seconds. The relationship between the predicted AUC (area under the receiver operating characteristic curve) and the objective function over time is shown in Figure \ref{figsythesize}; 
(2) For the second aspect, using the following stopping criteria: 
\begin{align}
\frac{|J^{k+1}-J^{k}|}{n}<1e-5, ~or \notag \\
\frac{||\nabla J^{k+1}-\nabla J^{k}||_{2}}{n}<1e-4.
\label{stop}
\end{align}
where $n$ represents the sample number. For the first mode of synthesize dataset, the maximum running time was set by 400 seconds. For the second mode of synthesize dataset, the maximum running time was set by 600 seconds. We also run all algorithms ten times with the same hyperparameters and use the different random initialization points each time. The test results are presented in Table \ref{synthesize}. 


\begin{figure}[hbpt]
    \centering 
      \subfloat[First mode: 1000x200x200]
    {
    \centering 
      \includegraphics[width=1\linewidth]{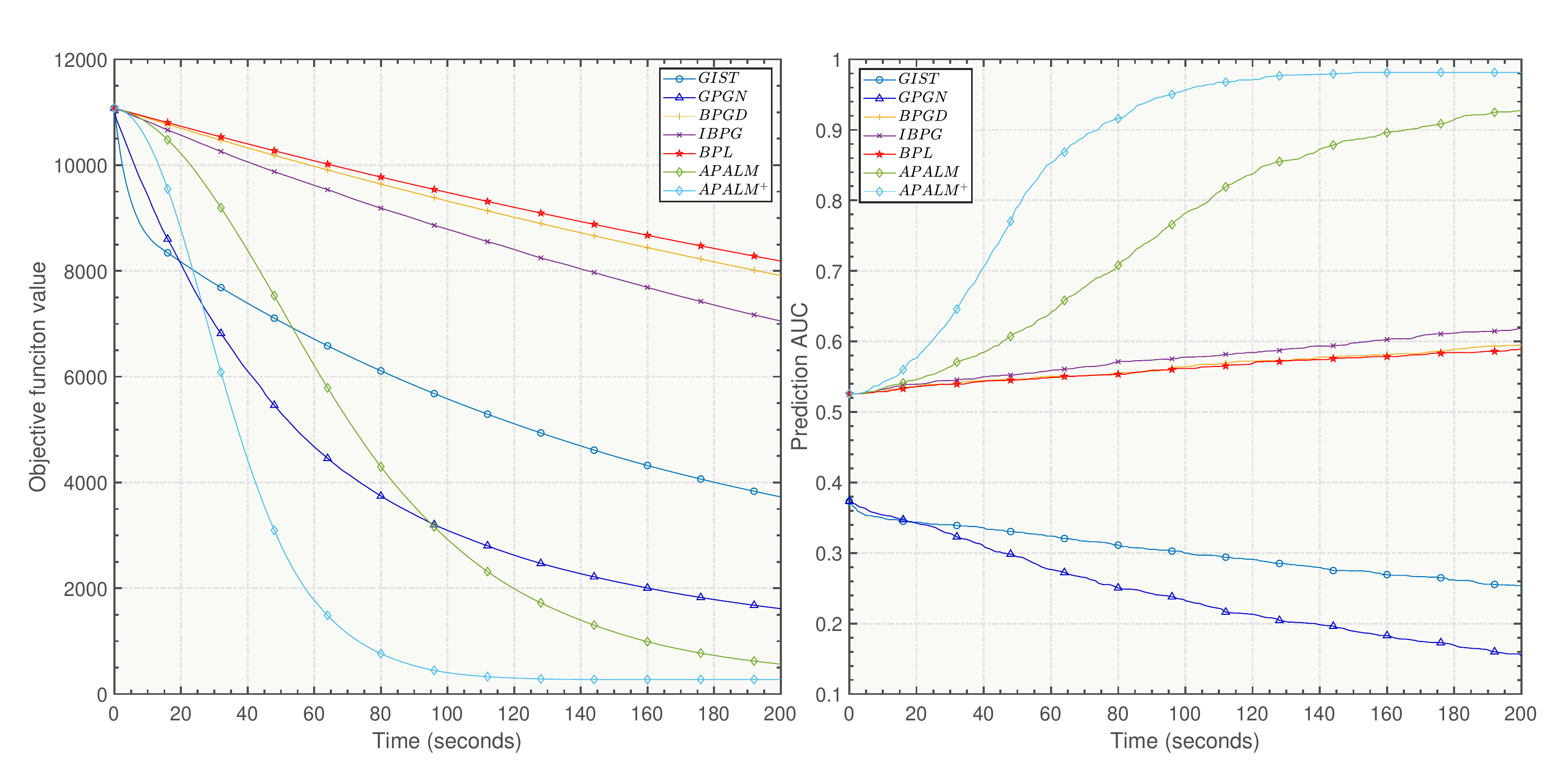}
    } \\
    \subfloat[Second mode: 200x800x800]
    {
    \centering 
      \includegraphics[width=1\linewidth]{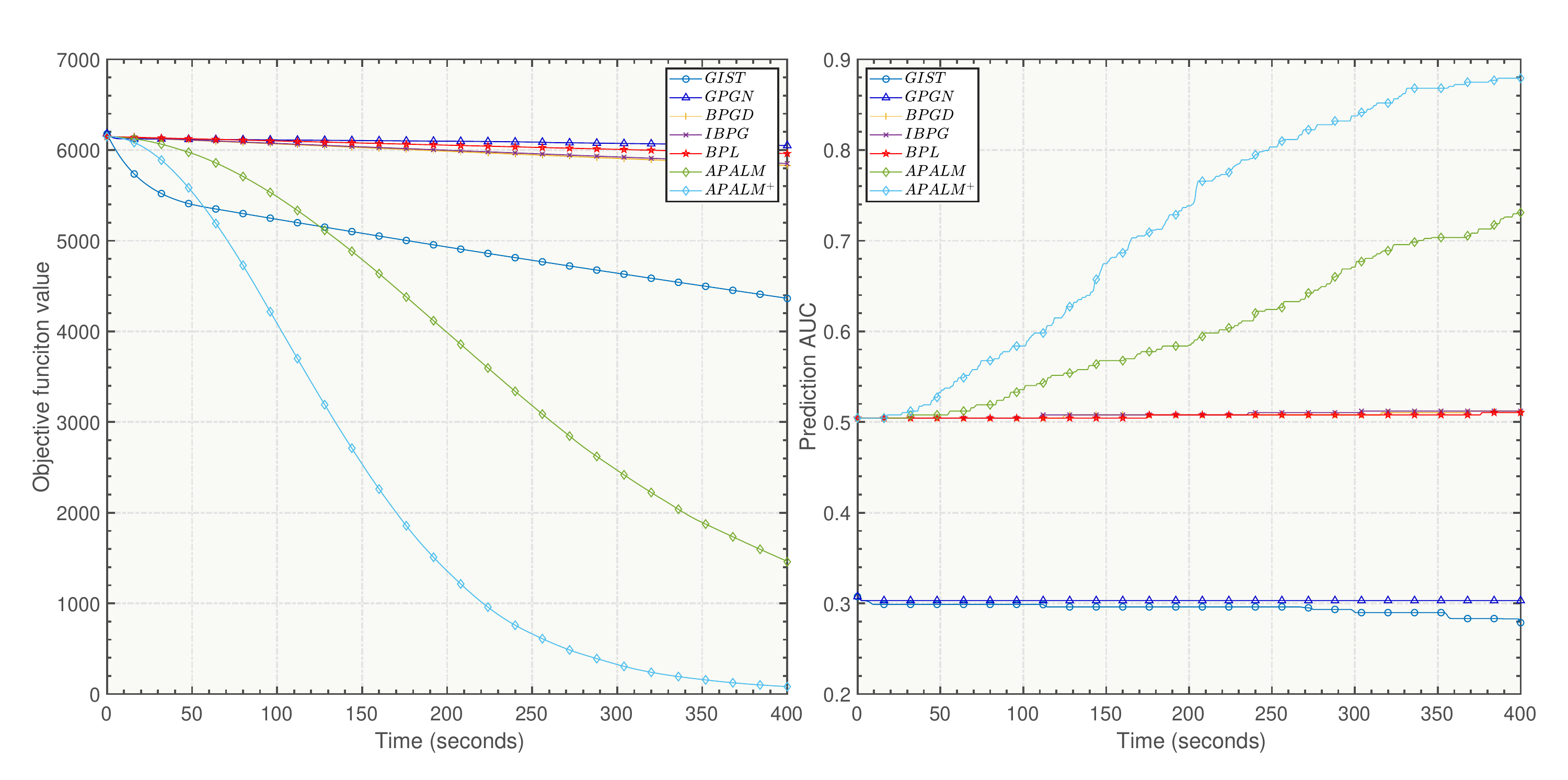}
    }
    \caption{
    Comparison of the average convergence speed of the objective function and AUC  of different algorithms on synthetic dataset. The convergence curves show the average value of the objective function and the average AUC. 
    }
    \label{figsythesize}
\end{figure}

\begin{table*}
\renewcommand\arraystretch{1.4}
\centering
\caption{Comparison of algorithms applied on the real datasets. The results are the average and standard deviation of ten runs, and the bold indicates the best numerical performance under the set termination conditions.}
\begin{tabular*}{\linewidth}{p{2cm}|p{1.7cm}|p{3.15cm}|p{2.8cm}|p{3cm}|p{2.9cm}} 
\toprule     
Data & Algorithm  &Objective function value & Time (seconds) & Accuracy (\%) $\pm$ std & AUC $\pm$ std \\
\midrule   
\multirow{7}{*}{First mode: } & \multirow{1}{*}{GIST}  & $2.33*10^{3}$ & 400 & $19.80\pm6.18$ & $0.20\pm0.07$  \\ \cline{2-6}

\multirow{7}{*}{1000x200x200} & \multirow{1}{*}{GPGN} & $763.16$ & 400 & $9.25\pm2.35$ & $0.09\pm0.02$  \\  \cline{2-6}

\multirow{7}{*}{} & BPGD &  $6.58*10^{3}$ 
& 400 & $62.75\pm19.86$ &  $0.63\pm0.20$\\ \cline{2-6} 

\multirow{7}{*}{} & IBPG  & $5.20*10^{3}$ & 400 & $66.25\pm19.40 $  & $0.66\pm0.19$  \\ \cline{2-6}

\multirow{7}{*}{} & BPL  & $7.07*10^{3}$ & 400 & $61.80\pm19.97$ &$0.61\pm0.19$   \\ \cline{2-6}

\multirow{7}{*}{} & \multirow{1}{*}{APALM}  & $293.49$ & 366.56 & $92.85\pm5.23$ & $0.92\pm0.05$  \\ \cline{2-6}

\multirow{7}{*}{} & \multirow{1}{*}{APALM$^+$}  & $\bold{275.77}$ & $\bold{100.10}$ & $\bold{94.85\pm5.15}$ & $\bold{0.95\pm0.05}$ \\ \hline 

\multirow{7}{*}{Second mode: } & \multirow{1}{*}{GIST}  & $3.51*10^{3}$ & 600 & $26.50\pm6.05$ & $0.28\pm0.075$  \\ \cline{2-6}

\multirow{7}{*}{200x800x800} & \multirow{1}{*}{GPGN} & $5.22*10^{3}$ & 600 & $28.50\pm6.63$ & $0.29\pm0.079$  \\  \cline{2-6}

\multirow{7}{*}{} & BPGD &  $4.95*10^{3}$ 
& 600 & $57.00\pm9.92$ &  $0.59\pm0.11$ \\ \cline{2-6} 

\multirow{7}{*}{} & IBPG  & $4.99*10^{3}$ & 600 & $57.25\pm10.02$   & $0.59\pm0.11$  \\ \cline{2-6}

\multirow{7}{*}{} & BPL  & $5.11*10^{3}$ & 600 & $56.75\pm10.00$ & $0.58\pm0.11$   \\ \cline{2-6}

\multirow{7}{*}{} & \multirow{1}{*}{APALM}  & $494.34$ & 556.18 & $81.00\pm7.92$ & $0.83\pm0.10$  \\ \cline{2-6}

\multirow{7}{*}{} & \multirow{1}{*}{APALM$^+$}  & $\bold{57.10}$ & $\bold{372.85}$ & $\bold{87.00\pm3.67}$ & $\bold{0.88\pm0.06}$ \\ \hline 
\end{tabular*}
\label{synthesize}
\end{table*}

Utilizing Table \ref{synthesize} and Figure \ref{figsythesize} as references, our observations include 
(i) APALM$^+$ demonstrates a notable ability to uncover the underlying data structure within the matrix compared to other algorithms. In contrast, traditional sparse logistic regression methods (GPGN and GIST) exhibit limited effectiveness in this aspect; 
(ii) Compared with other algorithms, APALM$^+$ consistently surpasses them across all evaluation. 
(iii) The impact of APALM$^+$ (adaptive momentum) is better than APALM (non-adaptive); 
(iv) As the sample dimension of the synthetic dataset increases, the iteration speed of GPGN and GIST notably slows down. However, algorithms employing tensor-based input, like APALM$^+$, remain largely unaffected. This illustrates the effectiveness of tensor-based logistic regression in mitigating the curse of dimensionality. 

\subsection{Experiments on Real Data}
We consider three real data sets: Concrete Crack Images for Classification\footnote{\url{https://data.mendeley.com/datasets/5y9wdsg2zt/1}} (227*227*3) \cite{zhang2016road}, GochiUsa-Faces\footnote{\url{https://www.kaggle.com/datasets/rignak/gochiusa-faces}} (26*26*3$\sim$987x987*3) \cite{GochiUsa_Faces}, and Br35H :: Brain Tumor Detection 2020\footnote{\url{https://www.kaggle.com/datasets/ahmedhamada0/brain-tumor-detection}} (201*251$\sim$1024*1024) \cite{br35h-::-brain-tumor-detection-2020_dataset}. For Concrete Crack Images, we will take 500 samples from each category respectively to utilize. For the GochiUsa-Faces dataset, we take 500 samples from the categories ``Chino" and ``Chiya" respectively under the folder ``DANBOORU" to utilize and reshape each sample to 128*128*3 size. For Br35H :: Brain Tumor Detection 2020, we will take 500 samples from each category respectively to utilize and reshape each sample to 256*256 size. For each sample, scaling to $[-1,1]$ in accordance with the feature is the normalization and standardization approach used.

We first explain how to choose the parameters for Algorithm \ref{APALM$^+$}, we set each initial hyperparameter according to the initial conditions as: $t=1.3, ~ \beta_{1}=0.6, ~ \beta_{max}=0.9999, ~ \gamma=1. 5, \lambda=\left\{2*10^{-4},2*10^{-4},2*10^{-3}\right\}$, and we select 80$\%$ as training set and 20$\%$ as test set. We run the algorithms for a specific sparsity setting: the number of non-zero elements in each vector cannot exceed $30\%$ of the total number of elements, The initial point of $(\mathcal{W},b)$ is set to a standard Gaussian distributed random sparse definite vector.

In order to show the effectiveness of the proposed algorithms and compare the accuracy of the solutions, we will evaluate them from two aspects: 
(1) we run all algorithms ten times with the same hyperparameters, each time using the different random initialization point, and let all algorithms run for the same length of time. For the Concrete Crack Image and GochiUsa-Faces datasets, the running time was set by 300 seconds. For brain Tumor Detection 2020, the running time was set by 1300 seconds. The relationship between the predicted AUC and the objective function over time is shown in Figure \ref{figreal};  
(2) For the second aspect, using the same stopping criteria as Eq. (\ref{stop}), the maximum running time was set by 600 seconds for Concrete Crack Images and GochiUsa-Faces datasets.  For Brain Tumor Detection 2020, the maximum running time was set by 2000 seconds. We also run all algorithms ten times with the same hyperparameters and use the different random initialization points each time. The test results are presented in Table \ref{real}.

\begin{figure}[hbpt]
    \centering 
      \subfloat[Dataset: Concrete Crack Images for Classification]
    {
        \centering 
      \includegraphics[width=1\linewidth]{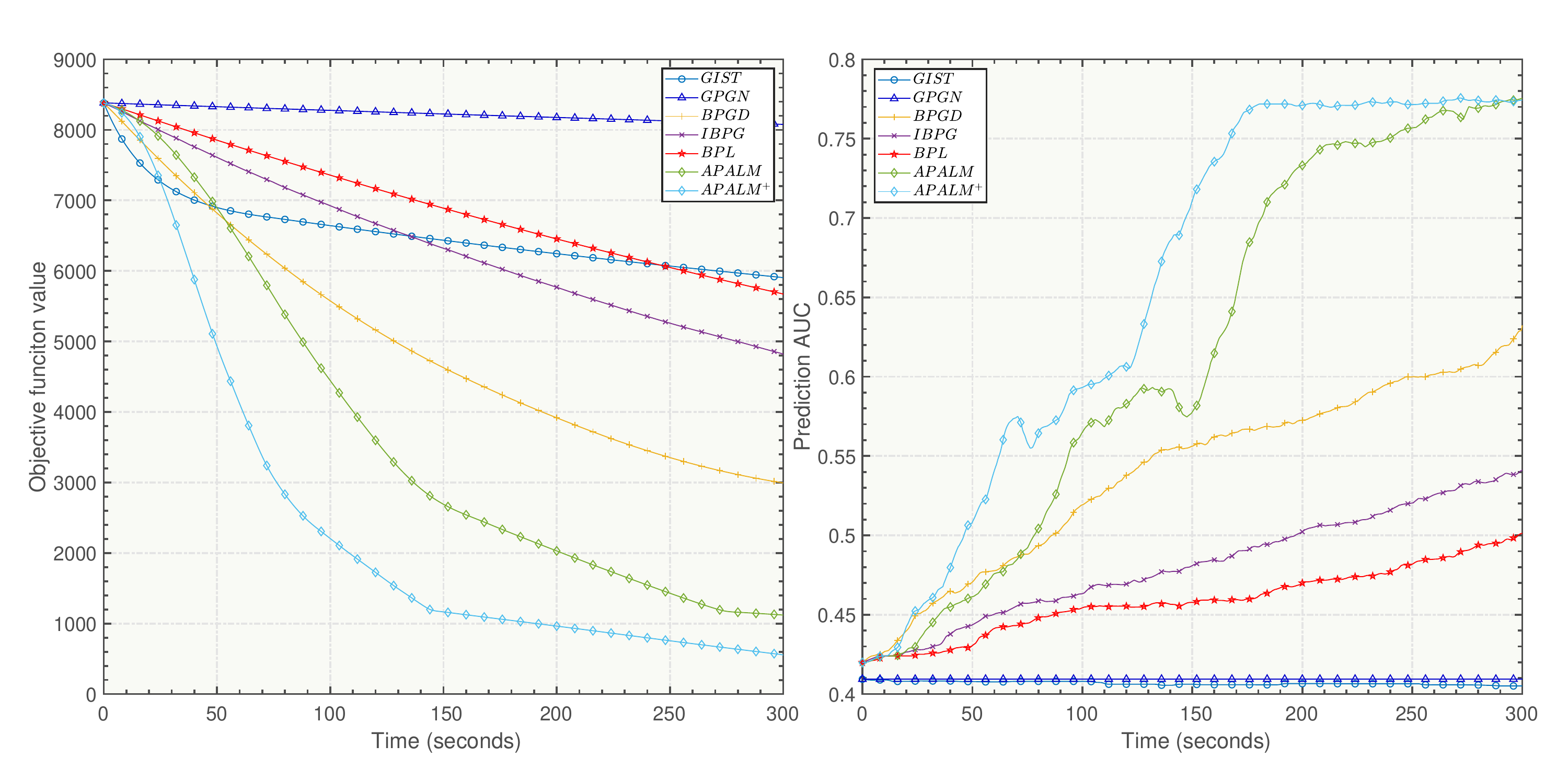}
    } \\
    \subfloat[Dataset: GochiUsa-Faces]
    {
      \includegraphics[width=1\linewidth]{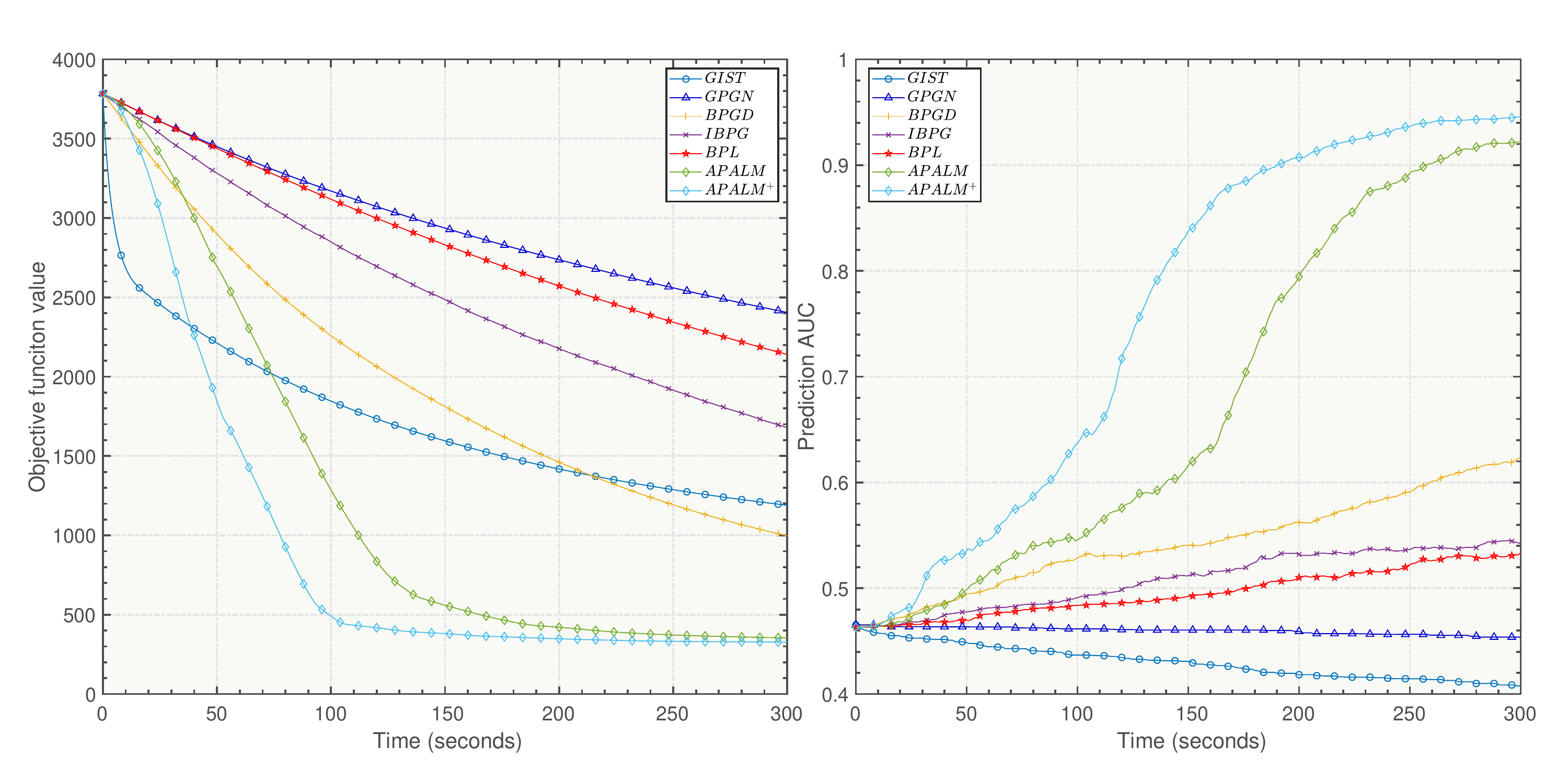}
    }\\
    \subfloat[Dataset: Brain Tumor Detection 2020]
    {
      \includegraphics[width=1\linewidth]{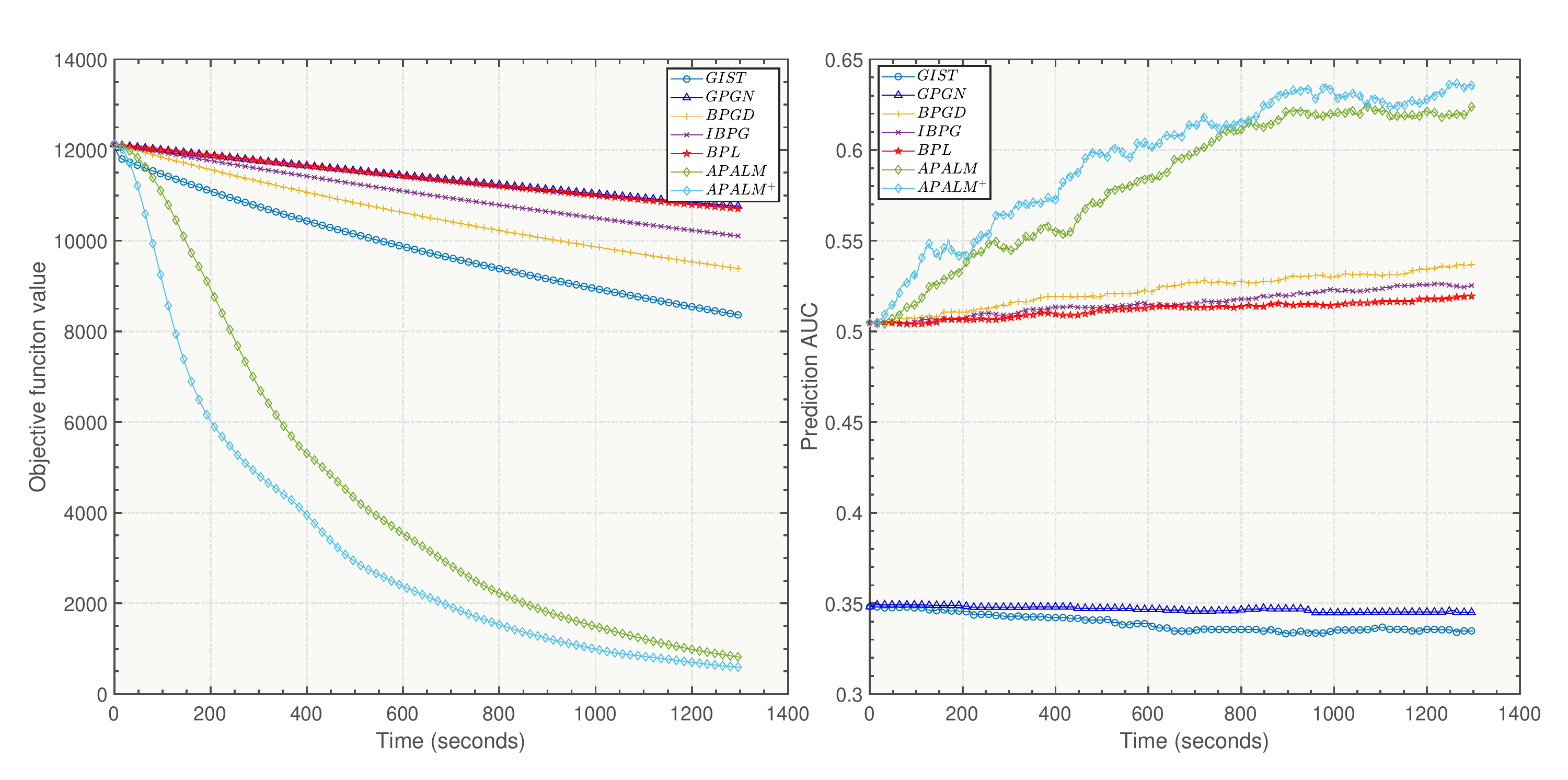}
    }
    \caption{
    Comparison of the average convergence speed of the objective function and AUC of different algorithms on real dataset. The Convergence curves shows the average objective function value and average AUC. 
    }
    \label{figreal}
\end{figure}

\begin{table*}
\renewcommand\arraystretch{1.4}
\centering
\caption{Comparison of algorithms applied on the real datasets. The results are the average and standard deviation of ten runs, and the bold indicates the best numerical performance under the set termination conditions.}
\begin{tabular*}{\linewidth}{p{2cm}|p{1.7cm}|p{3.15cm}|p{2.8cm}|p{3cm}|p{2.9cm}}  
\toprule     
Data & Algorithm  &Objective function value & Time (seconds) & Accuracy (\%) $\pm$ std & AUC $\pm$ std \\
\midrule   
\multirow{7}{*}{Concrete Crack} & \multirow{1}{*}{GIST}  & $4.05*10^{3}$ & 600 & $40.65\pm11.62$ & $0.39\pm0.11$\\ \cline{2-6}

\multirow{7}{*}{Images for} & \multirow{1}{*}{GPGN} & $5.72*10^{3}$ & 600 & $42.20\pm12.05$ & $0.40\pm0.11$ \\  \cline{2-6}

\multirow{7}{*}{Classification} & BPGD &  $1.34*10^{3}$ 
& 600 & $72.25\pm11.11$ &  $0.72\pm0.12$\\ \cline{2-6} 

\multirow{7}{*}{} & IBPG  & $2.34*10^{3}$  & 581.06 & $67.3\pm19.24$   & $0.67\pm0.19$  \\ \cline{2-6}

\multirow{7}{*}{} & BPL  & $3.24*10^{3}$ & 587.23 & $64.85\pm18.36$ & $0.65\pm0.18$  \\ \cline{2-6}

\multirow{7}{*}{} & \multirow{1}{*}{APALM}  & $391.15$ & 519.02 & $82.55\pm5.82$ & $0.83\pm0.062$  \\ \cline{2-6}

\multirow{7}{*}{} & \multirow{1}{*}{APALM$^+$}  & $\bold{389.11}$ & $\bold{399.39}$ & $\bold{83.3\pm3.37}$ & $\bold{0.84\pm0.034}$ \\ \hline

\multirow{7}{*}{GochiUsa-Faces } & \multirow{1}{*}{GIST}  & $1.04*10^{3}$ & 600 & $36.90\pm8,07$ & $0.37\pm0.098$  \\ \cline{2-6}

\multirow{7}{*}{} & \multirow{1}{*}{GPGN} & $2.20*10^{3}$ & 600 & $43.60\pm5.47$ & $0.44\pm0.072$  \\  \cline{2-6}

\multirow{7}{*}{} & BPGD &  $891.92$ 
& 600 & $74.35\pm17.78$ &  $0.75\pm0.17$ \\ \cline{2-6} 

\multirow{7}{*}{} & IBPG  & $1.20*10^{3}$ & 580.45 & $62.55\pm17.59$   & $0.64\pm0.16$  \\ \cline{2-6}

\multirow{7}{*}{} & BPL  & $1.69*10^{3}$ & 592.92 & $58.55\pm15.04$ & $0.60\pm0.13$   \\ \cline{2-6}

\multirow{7}{*}{} & \multirow{1}{*}{APALM}  & $325.71$ & 472.46 & $93.8\pm4.70$ & $0.94\pm0.044$  \\ \cline{2-6}

\multirow{7}{*}{} & \multirow{1}{*}{APALM$^+$}  & $\bold{325.17}$ & $\bold{353.01}$ & $\bold{94.3\pm4.40}$ & $\bold{0.95\pm0.042}$ \\ \hline

\multirow{7}{*}{Br35H ::} & \multirow{1}{*}{GIST}  & $7.055*10^{3}$ & 2000 & $35.20\pm4.36$ & $0.34\pm0.034$  \\ \cline{2-6}

\multirow{7}{*}{Brain Tumo-} & \multirow{1}{*}{GPGN} & $9.36*10^{4}$ & 2000 & $36.40\pm3.73$ & $0.35\pm0.026$  \\  \cline{2-6}

\multirow{7}{*}{Detection 2020} & BPGD &  $7.19*10^{3}$ 
& 2000 & $53.65\pm5.26$ &  $0.53\pm0.048$ \\ \cline{2-6} 

\multirow{7}{*}{} & IBPG  & $8.12*10^{3}$ & 2000 & $51.45\pm4.36$   & $0.51\pm0.042$  \\ \cline{2-6}

\multirow{7}{*}{} & BPL  & $8.91*10^{3}$ & 2000 & $50.70\pm4.64$ & $0.50\pm0.043$   \\ \cline{2-6}

\multirow{7}{*}{} & \multirow{1}{*}{APALM}  & $351.85$ & 1881.64 & $62.90\pm6.06$ & $0.61\pm0.067$  \\ \cline{2-6}

\multirow{7}{*}{} & \multirow{1}{*}{APALM$^+$}  & $\bold{322.34}$ & $\bold{1687.75}$ & $\bold{63.15\pm6.72}$ & $\bold{0.62\pm0.078}$ \\ \hline 

\end{tabular*}
\label{real}
\end{table*}

Utilizing Table \ref{real} and Figure \ref{figreal} as references, we observed
(i) APALM$^+$ demonstrates a notable ability to uncover the underlying data structure within the matrix and tensor compared to other algorithms. In contrast, traditional sparse logistic regression methods (GPGN and GIST) exhibit limited effectiveness in this aspect; 
(ii) In contrast to other algorithms, APALM$^+$ consistently surpasses them across all evaluation. 
(iii) The impact of APALM$^+$ (adaptive momentum) is better than APALM (non-adaptive);  
(iv) As the sample dimension of the real dataset increases, the iteration speed of GPGN and GIST notably slows down, but the algorithms employing tensor input, such as APALM$^+$, remain remarkably unaffected. This illustrates the effectiveness of tensor-based logistic regression in mitigating the curse of dimensionality.

\section{Conclusion}
In this paper, we present a Multilinear Sparse Logistic Regression ($\ell_0$-MLSR) model with $\ell_0$-constraints, it enables direct predictions using tensor-based input data and ensure solution sparsity through $\ell_0$-constraints. However, the challenge lies in solving the model due to its association with a nonconvex and nonsmooth optimization problem. 
To address the $\ell_0$-MLSR model, we develop an Accelerated Proximal Alternating Linearized Minimization with Adaptive Momentum (APALM$^+$).  
By utilizing APALM$^+$ to solve $\ell_0$-MLSR, we establish the objective function's convergence. By utilizing the Kurdyka-ojasiewicz property, we also establish the global convergence and convergence rate of APALM$^+$. 
The results from numerical experiments provide strong evidence for the effectiveness of our proposed algorithm. 
These findings underscore our algorithm's potential to drive substantial progress in the $\ell_0$-MLSR model, thus showcasing its promising aptitude for advancing multilinear machine learning models.

\begin{appendices}
\section{Proof for Convergence Analysis}
To simplify the following proof process, we first show an equivalent expression of Eq.(4). Mathematically, Eq. (\ref{e23}) is a member of the following family of functions:
\begin{align}
J(x_{(p+1)})=H(x_{(p+1)})+\sum_{i=1}^{p+1} F_{i}(x_{i}).
\label{e62}
\end{align}
where Eq. (\ref{e62}) satisfies Assumption \ref{assump1}. When $x_{(p)}=\mathcal{W},~ b=x_{p+1}$, Eq. (\ref{e62}) degenerates into Eq. (\ref{e23}). 

Let $S$ be the domain of Eq. (\ref{e62}), the proximal operator is defined as: 
\begin{equation}
\begin{aligned}
&&x^{k+1}_{j} &\in \operatorname*{\arg\min} \limits_{x \in S^{d_{j}} \subseteq S} (F_{j}(x_{j})+\frac{1}{2\sigma_{j}^{k}}||x-y^{k}_{j}||^{2} \\
&& \ &+\langle \nabla_{x_{j}} H(\left\{{g_{i}} \right\}^{j-1}_{i=1}, y^{k}_{j}, \left\{{g_{i}} \right\}^{n}_{i=j+1}), x-y^{k}_{j} \rangle).  
\end{aligned}
\label{e63}
\end{equation}
where $\sigma_{j}^{k}=\frac{1}{\tau^{k}_{j}}$ (compare to Eq. (\ref{prox})). 

Since Eq. (\ref{e23}) is just a special case of Eq. (\ref{e62}),  thus the Assumption \ref{assump1}, Eq. (\ref{e62}) and Eq. (\ref{e63}) will serve as the foundation for all subsequent proofs in this appendix.

\subsection{Proof of  Theorem \ref{t1}} \label{TP1}
\begin{proof}
From Proposition \ref{p3},when $n=1$, we know that: 
\begin{align}
&&H(x^{k+1}_{(1)},\left\{y_{i}^{k}\right\}_{i=1}^{P})&\leq \langle \nabla_{x_{1}} H(y^{k}_{(P)}),x_{1}^{k+1}-y^{k}_{1}\rangle \notag\\ 
&&\ &+H(y^{k}_{(P)})+\frac{L_{\nabla_{x_{1}} H}}{2}||x^{k+1}_{1}-y^{k}_{1}|| 
\label{e311}
\end{align}
since Eq. (\ref{e63}), we obtain:
\begin{align}
 && F_{1}(y^{k}_{1}) &\geq F_{1}(x^{k+1}_{1})+\frac{\sigma^{k}_{1}}{2}||x^{k+1}_{1}-y^{k}_{1}||^{2} \notag \\
 &&\  &+\langle \nabla_{x_{1}} H(x^{k+1}_{1},\left\{y_{i}^{k}\right\}_{i=1}^{P}),x_{1}^{k+1}-y^{k}_{1} \rangle.\label{e312}
\end{align}
then sum of the Eq. (\ref{e311}) and Eq. (\ref{e312}), we have:
\begin{align}
 && H(x^{k+1}_{1},\left\{y_{i}^{k}\right\}_{i=1}^{P}) &\leq H(y^{k}_{(P)})+F_{1}(y_{1}^{k+1}) \notag \\
 &&\  &-\rho||x^{k+1}_{1}-y^{k}_{1}||^{2}. \notag
\end{align}
where $\rho=\frac{1}{2\sigma^{k}_{1}}-L_{\nabla_{x_{1}} H}$. Therefore, when $n=1$,  Theorem \ref{t1} is obviously true. Assuming the  Theorem \ref{t1} (\romannumeral1) holds when $n=N$, i.e.
\begin{align}
&& H(x^{k+1}_{(N)},\left\{y_{i}^{k}\right\}_{i=N+1}^{P}))&\leq H(y^{k+1}_{(P)})+\sum_{i=1}^{N}F_{i}(y_{i}^{k+1}) \notag \\
&& &-\rho||x^{k+1}_{(N)}-y^{k}_{(N)}||-\sum_{i=1}^{N}F_{i}(x_{i}^{k+1}). \label{e313}
\end{align}
where $\rho=\min(\left\{\frac{1}{2\sigma^{k}_{i}}-L_{\nabla_{x_{i}} H}
\right\}_{i=1}^{N})$. We hope that is still established when $n=N+1$. Since Eq. (\ref{e63}) and Proposition \ref{p3}, there are some inequalities that hold:
\begin{align}
&& &F_{N+1}(y^{k}_{N+1}) \geq F_{N+1}(x^{k+1}_{N+1}) +\frac{\sigma^{k}_{N+1}}{2}||x^{k+1}_{N+1}-y^{k}_{N+1}||^{2} \notag \\ 
&&   &+\langle \nabla_{x_{N+1}} H(x^{k+1}_{(N+1)},\left\{y_{i}^{k}\right\}_{i=N+2}^{P}),x_{N+1}^{k+1}-y^{k}_{N+1} \rangle. 
\label{e314}
\end{align}    
From Proposition \ref{p3}, we infer: 
\begin{align}
&& &H(x^{k+1}_{(N+1)},\left\{y_{i}^{k}\right\}_{i=N+2}^{P}) \leq H(x^{k+1}_{(N)},\left\{y_{i}^{k}\right\}_{i=N+1}^{P})  \notag \\
&& &+\frac{L_{\nabla_{x_{N+1}} H}}{2}||x^{k+1}_{N+1}-y^{k}_{N+1}|| \notag \\
&& &+ \langle \nabla_{x_{N+1}} H(x^{k+1}_{(N+1)},\left\{y_{i}^{k}\right\}_{i=N+2}^{P}),x^{k+1}_{N+1}-y^{k}_{N+1}\rangle. \label{e315}
\end{align}
then sum of the Eq. (\ref{e313}), Eq. (\ref{e314}) and Eq. (\ref{e315}), when $n=P=p+1$, the  Theorem \ref{t1} (\romannumeral1) is true.

\myspace{0} because J is lower bounded, thus: 
\begin{center}
$\sum_{k=1}^{\infty} ||x^{k+1}_{(P)}-y^{k}_{(P)}||^{2}=J(x_{(P)}^{0})-inf J<\infty.$
\end{center}
which imply $\lim_{k \to \infty} ||x^{k+1}_{(P)}-y^{k}_{(P)}||=0$.
\end{proof}

\subsection{Proof of  Theorem \ref{t2}} \label{TP2}
\begin{proof}
(\romannumeral1) Let: 
\begin{center}
$h(y_{j}^{k})=H(\left\{{x_{i}}^{k+1} \right\}^{j-1}_{i=1}),y_{j}^{k},\left\{{y_{i}}^{k} \right\}^{P}_{i=j+1}))$
\end{center}
by \ref{p1} and Proposition \ref{p2}, it follows that:
\begin{align}
&& 0 &\in  \nabla_{x_{j}}h(y^{k}_{j})-\frac{1}{\sigma^{k}_{j}} (y^{k}_{j}-x^{k+1}_{j})\notag \\
&& &+\partial_{x_{j}} (\sum_{i=1}^{j} F_{i}(x^{k+1}_{i})+\sum_{i=j+1}^{P} F_{i}(y^{k}_{i})) \notag
\end{align}
which imply:\\
\begin{align}
&& &\nabla_{x_{j}} h(x^{k+1}_{j}) -\nabla_{x_{j}}h (y^{k}_{j})+ \frac{1}{\sigma^{k}_{j}}  ||y^{k}_{j}-x^{k+1}_{j} || \notag \\ 
&&\ &\in \nabla_{x_{j}}h (x^{k+1}_{j})+\partial_{x_{j}} (\sum_{i=1}^{j} F_{i}(x^{k+1}_{i})+\sum_{i=j+1}^{P} F_{i}(y^{k}_{i})) \notag \\
&&\ &= \partial_{x_{j}}  J(\left\{{x^{k+1}_{i}}\right\}^{j-1}_{i=1},x^{k+1}_{j},\left\{{y^{k}_{i}} \right\}^{P}_{i=j+1})\notag
\end{align}

(\romannumeral2) Define: 
\begin{align}
g_{x_{j}^{k+1}}=\nabla_{x_{j}}h (x^{k+1}_{j})-\nabla_{y_{j}}h (y^{k}_{j})+\frac{1}{\sigma^{k}_{j}
}||y^{k}_{j}-x^{k+1}_{j}||. \notag
\end{align}
Since Definition \ref{dlipchitz}, ${\forall} j \in \mathbb{N}$, we have: 

\begin{align}
&& ||g_{x_{j}^{k+1}}-g_{y_{j}^{k}}|| &\leq \frac{L_{\nabla_{x_{j}} H}}{2}||x^{k+1}_{j}-y^{k}_{j}||+\frac{1}{2\sigma^{k}_{j}} ||x^{k+1}_{j}-y^{k}_{j}|| \notag \\
&& &=(\frac{L_{\nabla_{x_{1}} H}}{2}+\frac{1}{2\sigma^{k}_{j}}) ||x^{k+1}_{j}-y^{k}_{j}||.  \notag
\end{align}
thus we infer:
\begin{align}
||\left\{g_{x^{k+1}_{i}} \right\}^{p}_{i=1}||=\sum_{i=1}^{P} ||g_{x_{j}^{k+1}}-g_{y_{j}^{k}}||\leq  \rho_{b}||(\left\{{x^{k+1}_{i}}-{y^{k}_{i}}\right\}^{p}_{i=1}||.
\end{align}
where $\rho_{b}=max(\left\{\frac{1}{2\sigma^{k}_{i}}+L_{\nabla_{x_{i}} H}\right\}_{i=1}^{n+1})$. when $P=p+1$, Theorem \ref{t2} is true.
\end{proof}

The definition of $z^{k}$ and $c^{k}$ is the same as the main body text, namely:
\begin{center}
$z^{k}=\left\{x^{k}_{i} \right\}^{P}_{i=1},~c^{k}=\left\{y^{k}_{i} \right\}^{P}_{i=1}$.
\end{center}

\subsection{Proof of   Theorem \ref{c5}} \label{TP3}
\begin{proof}
According to Assumption \ref{assump1}, ${\forall}\overline{z} \in z'$, there exists a subsequence $z^{k_{j}}$  such that: $\lim_{j \to \infty} z^{k_{j}}= \overline{z}$, since Lemma \ref{tf4}, $\overline{z}$ of arbitrariness and Theorem \ref{t1}, we infer: 
  \begin{center}
   $\lim_{j \to \infty} J(z^{k_{j}})=J( \overline{z})=J^{*}$.
    \end{center}
which means J is a constant on z'. Since Theorem \ref{t1} and Theorem \ref{t2}, we have: 
  \begin{center}
   $\lim_{k \to \infty} ||\left\{g_{x^{k+1}_{i}} \right\}^{P}_{i=1}|| \leq \lim_{k \to \infty} \rho_{b}||(\left\{{x^{k+1}_{i}}-{y^{k}_{i}}\right\}^{P}_{i=1}||, $\\
   $\lim_{k \to \infty}||\left\{g_{x^{k+1}_{i}} \right\}^{P}_{i=1}|| = 0$.
    \end{center}
when $P=p+1$, $z' \subseteq crit J$. 
\end{proof}

\subsection{Proof of Theorem \ref{glo}} \label{TP4}
A straightforward application of the methodology cited in \cite{bolte2014proximal} to our algorithms is not feasible because our algorithms (APALM and APALM$^+$) have the presence of extrapolated sequences in a sufficiently descending property. In the following proof, we adapt the proof strategy to make it compatible with the algorithm we provide. 
\begin{proof}
According to Assumption \ref{assump1}, $\left\{z^{k}\right\}$ is bounded and complete. From Definition \ref{d8}, ${\forall} \eta >0$, there exists a positive integer $k_{0}$ such that: $J( \overline{z})<J(z^{k_{0}})<J( \overline{z})+\eta$. Since Definition \ref{d8}, there exists a concave function $\phi$ so that: $ \phi^{'}(J(z^{k})-J( \overline{z})) dist(0, \partial J(z^{k})) \geq 1$, thus we infer: 
\begin{align}
&& dist(0, \partial J(z^{k}) ) &\leq ||\left\{p_{x^{k+1}_{i}} \right\}^{P}_{i=1}||=\rho_{b}||z^{k}-c^{k-1}||. \notag
\end{align}
according to the $ \phi^{'}(J(z^{k})-J( \overline{z})) dist(0, \partial J(z^{k})) \geq 1$, which imply: 
  \begin{center}
  $ \phi^{'}(J(z^{k})-J( \overline{z})) \geq \frac{1}{dist(0, \partial J(z^{k}))}$
  $\geq \frac{1}{\rho_{b}||z^{k}-c^{k-1}||}$.
    \end{center}
let $G(k)=J(z^{k})-J( \overline{z})$, from definition of concave function and Theorem \ref{t1}, we have: 
  \begin{align}
&& \phi(G(k))-\phi(G(k+1)) &\geq \phi^{'}(G(k))(G(k)-G(k+1))  \notag \\
&& &\geq \frac{\rho||z^{k+1}-c^{k}||^{2}}{\rho_{b}||z^{k}-c^{k-1}||}. \notag 
    \end{align}
define C=$\frac{\rho}{\rho_{b}}$, C is a constant, we infer: 
  \begin{center}
    $||z^{k+1}-c^{k}||^{2} \leq C (\phi(G(k))-\phi(G(k+1))) ||z^{k}-c^{k-1}||$.
    \end{center}
Using the fact that $2ab\leq a^{2}+b^{2}$: 
  \begin{center}
    $2 ||z^{k+1}-c^{k}|| \leq C (\phi(G(k))-\phi(G(k+1))) +||z^{k}-c^{k-1}||$.
    \end{center}
sum both sides: 
\begin{align}
&&  2 \sum_{k=l+1}^{K} ||z^{k+1}-c^{k}|| &\leq \sum_{k=l+1}^{K} ||z^{k}-c^{k-1}|| \notag \\
&& &+C(\phi(G(l+1))-\phi(G(K+1))). \notag 
\end{align}
thus from Lemma \ref{tf4} and Assumption \ref{assump1}, we get that: 
\begin{align}
&& \lim_{K \to \infty} \sum_{k=l+1}^{K} ||z^{k+1}-c^{k}|| &\leq ||z^{l+1}-c^{l}|+ C\phi(G(l+1)).  \notag \\
&& &-\lim_{K \to \infty} C \phi(G(K+1))) \notag \\
 && & < \infty .
 \label{c7}
\end{align}
from Assumption \ref{assump1}, no matter $c^{k} =z^{k}+\beta_{k} (z^{k}-z^{k-1})$ or $c^{k} =z^{k}$, we always have: 
  \begin{center}
    $ \ ||z^{k+1}-z^{k}|| -\beta_{max} ||z^{k}-z^{k-1}||  \
  \leq||z^{k+1}-c^{k}|| $.
    \end{center}
let $s_{k}=||z^{k+1}-z^{k}||$, from Eq. (\ref{c7}), we know that $\sum_{k=K}^{\infty}( s_{k+1} -\beta_{max} s_{k})<\infty$, thus we infer: 
 \begin{align}
&&\sum_{k=K+1}^{\infty} (1-\beta_{max})s_{k}-\beta_{max} s_{K} &=\sum_{k=K}^{\infty}( s_{k+1} -\beta_{max} s_{k}). \notag 
\end{align}
from  $(1-\beta_{max})s_{k}>0$ and $ (1-\beta_{max})$ is a constant, thus: 
    \begin{align}
    &&\lim_{K\rightarrow \infty} ||z^{K+s}-z^{K}|| \leq \lim_{K\rightarrow \infty}\sum_{k=K+1}^{\infty}s_{k}=0.
    \end{align}
which means the Theorem \ref{glo} is true. 
\end{proof}

\subsection{Proof of Theorem \ref{rate}}\label{TP5}
\begin{proof}
Checking the assumptions of Theorem 2 in reference \cite{li2017convergence}, we observe that all assumptions required in our algorithm are clearly satisfied, so the theorem holds. 
\end{proof}
\end{appendices}

\section*{Acknowledgment}
The work of W. Y and W. M. was supported in part by the National Natural Science Foundation of China (62262069), in part by the Program of Yunnan Key Laboratory of Intelligent Systems and Computing (202205AG070003), in part by the Yunnan Fundamental Research Projects under Grant (202201AT070469, 202301BF070001-019). 

\balance
\small{
\bibliographystyle{IEEEtran}
\bibliography{refer}
}
\end{document}